\newtheorem{thm}{Theorem}
\newtheorem{lemma}{Lemma}
\newtheorem{defn}{Definition}
\newcommand{\sH}{{\mathcal H}}
\newcommand{\zbar}{\bar{z}}
\newcommand{\sI}{\mathcal I}
\newcommand{\data}{\left\{ x_1,\dots,x_n \right\}}
\newcommand{\bbR}{\mathbb{R}} 
\newcommand{\uno}{\bm{1}} 
\newcommand{\rd}{\bbR^d}
\newcommand{\kernel}{\phi:\rd\times\rd\rightarrow\bbR}
\newcommand{\ind}[1]{\bm{1}_{\{#1\}}}
\newcommand{\brac}[1]{\left[#1\right]}
\newcommand{\set}[1]{\left\{#1\right\}}
\newcommand{\abs}[1]{\left\lvert #1 \right\rvert}
\newcommand{\paren}[1]{\left(#1\right)}
\newcommand{\norm}[1]{\left\|#1\right\|}
\newcommand{\inpr}[1]{\left<#1\right>}
\begin{document}

\title{Sparse Approximation of a Kernel Mean}
\author{Efr\'en Cruz Cort\'es}
\affil{
\small Department of Electrical Engineering and Computer Science \\
\small University of Michigan, Ann Arbor, MI}
\author{Clayton Scott}
\affil{
\small Department of Electrical Engineering and Computer Science \\
\small University of Michigan, Ann Arbor, MI}

\date{February 2015}
\maketitle

\begin{abstract}
Kernel means are frequently used to represent probability distributions in machine learning problems. In particular, the well known kernel density estimator and the kernel mean embedding both have the form of a kernel mean. Unfortunately, kernel means are faced with scalability issues. A single point evaluation of the kernel density estimator, for example, requires a computation time linear in the training sample size. To address this challenge, we present a method to efficiently construct a sparse approximation of a kernel mean. We do so by first establishing an incoherence-based bound on the approximation error, and then noticing that, for the case of radial kernels, the bound can be minimized by solving the $k$-center problem. The outcome is a linear time construction of a sparse kernel mean, which also lends itself naturally to an automatic sparsity selection scheme. We show the computational gains of our method by looking at three problems involving kernel means: Euclidean embedding of distributions, class proportion estimation, and clustering using the mean-shift algorithm.
\end{abstract}


\section{Introduction}

A {\em kernel mean} is a quantity of the form
\begin{equation}
\label{eqn:km}
\frac1{n} \sum_{i=1}^n \phi( \cdot, x_i),
\end{equation}
where $\phi$ is a {\em kernel} and $x_1, \ldots, x_n \in \rd$ are data
points. We define kernels rigorously below. Our treatment includes
many common examples of kernels, such as the Gaussian kernel, and
encompasses both symmetric positive definite kernels and kernels used for
nonparametric density
estimation.

Kernel means arise frequently in machine learning and nonparametric
statistics as representations of probability distributions. In this
context, $x_1, \ldots, x_n$ are understood to be realizations of some
unknown probability distribution. The kernel density estimator (KDE) is a
kernel mean that estimates the density of the data. The kernel mean
embedding (KME) is a kernel mean that maps the probability distribution
into a reproducing kernel Hilbert space. These two motivating applications
of kernel means are reviewed in more detail below.

This work is concerned with efficient computation of a sparse approximation of a kernel mean, taking the form
\begin{equation}
\label{eqn:skm}
\sum_{i=1}^n \alpha_i \phi(\cdot, x_i)
\end{equation}
where $\alpha_i \in \bbR$ and $k := | \{ i \, : \, \alpha_i \ne 0\} | \ll n$. In other words,
given $x_1, \ldots, x_n$, a kernel $\phi$, and a target sparsity $k$, we
seek a sparse kernel mean \eqref{eqn:skm} that accurately approximates the
kernel mean \eqref{eqn:km}. This problem is motivated by applications
where $n$ is so large that evaluation or manipulation of the full kernel
mean is computationally prohibitive. A sparse kernel mean can be evaluated
or manipulated much more efficiently. In the large $n$ regime, the
sparse approximation algorithm itself must be scalable, and as we argue
below, existing sparse approximation strategies are too slow.

Our primary contribution is an efficient algorithm for sparsely
approximating a kernel mean. The algorithm results from minimizing a
sparse approximation bound based on a novel notion of incoherence. We show
that in the context of kernel means based on a {\em radial} kernel (defined
below), minimizing the sparse approximation bound is equivalent to solving
the $k$-center problem on $x_1, \ldots, x_n$, which in turn leads to an
efficient algorithm.

The rest of the paper is outlined as follows. In Section \ref{sec:motivation} we review the KDE and KME, which motivate this work, and also introduce a general definition of kernel that encompasses both of these settings. Next, in Section \ref{sec:abstractformulation} we formulate the problem of sparsely approximating a sample mean in an inner product space, followed by a review of related work in Section \ref{sec:relatedwork}, where we also detail our contributions. In Section \ref{sec:subsetselection} we establish an incoherence-based sparse approximation bound. We then use the principle of bound minimization in Section \ref{sec:skm} to derive a scalable algorithm for sparse approximation of kernel means, with a sparsity auto-selection scheme presented in Section \ref{sec:autoselect}. Finally, Section \ref{sec:experiments} applies our methodology in three different machine learning problems that rely on large-scale KDEs and KMEs, and demonstrates the efficacy of our approach. A preliminary version of this work appeared in \cite{cruz2014scalable}. A Matlab implementation of our algorithm is available at \cite{scott0000website}.

\section{Motivating Applications} \label{sec:motivation}

Our work is motivated by two primary examples of kernel means. We review
the KDE and KME separately, and then propose a general notion of kernel
that encompasses the essential features of both settings and is
sufficient for addressing the sparse approximation problem. By way of
notation, we denote $\brac{n} := \{1, \ldots, n\}$.

\subsection{Kernel Density Estimation}

Let $\data\subset \rd$ be a random sample from a distribution with density $f$. In the context of kernel density estimation, a kernel is a function $\phi$ such that for all $x'$,
$\int \phi(x,x') dx = 1$. In addition, $\phi$ is sometimes also chosen to
be nonnegative, although this is not necessary for theoretical properties such as consistency. The kernel density estimator of $f$ is the function
$$
\widehat{f} = \frac{1}{n}\sum_{i\in\brac{n}}{\phi(\cdot,x_i)}.
$$

The KDE is used as an ingredient in a number of machine learning methodologies. For example, a common approach to classification is a plug-in rule that estimates the class-conditional densities with separate KDEs \cite{titterington1981comparison, hand1983comparison}. In anomaly detection, a detector of the form $\widehat{f}(x) \substack{> \\ <} \gamma$ is commonly employed to determine if a new realization comes from $f$ \cite{desforges1998applications, yeung2002parzen,markou2003novelty,chandola2009anomaly}. In clustering, the mean-shift algorithm forms a KDE and associates each data point to the mode of the KDE that is reached by hill-climbing \cite{cheng1995mean}.

Evaluating the KDE at a single test point requires $O(n)$ kernel
evaluations, which is undesirable and perhaps prohibitive for large $n$.
On the other hand, a sparse approximation with sparsity $k$ requires only $O(k)$ kernel
evaluations. This problem is magnified in algorithms such as mean-shift,
where a (derivative of a) KDE is evaluated numerous times for each data
point. In our experiments below, we demonstrate the computational savings
of our approach in KDE-based algorithms for the embedding of probability
distributions and mean-shift clustering.

\subsection{Kernel Mean Embedding of Distributions}

Let $\data\subset \rd$ be a random sample from a distribution $P$. A {\em symmetric positive definite kernel} is a function $\kernel$ that is symmetric and is such that all square matrices of the form $[\phi(x_i, x_j)]_{i,j=1}^n$ are positive semidefinite. Every symmetric positive definite kernel is associated to a unique Hilbert space of functions called a reproducing kernel Hilbert space (RKHS), which can be thought of as the closed linear span of $\{\phi(\cdot,x) \, | \, x \in \rd\}$ \cite{steinwart08}. The RKHS has a property known as the {\em reproducing property} which states that for all $f$ in the RKHS, $f(x) = \langle f, \phi(\cdot, x) \rangle$.

The idea behind the kernel mean embedding is to select a symmetric positive definite kernel $\phi$, and embed $P$ in the RKHS associated with $\phi$ via the mapping
$$
\Psi(P):=\int \phi(\cdot,x)dP(x) .
$$
Since $P$ is unknown, this mapping is estimated via the kernel mean
$$
\widehat{\Psi}(P) := \frac{1}{n}\sum_{i\in\brac{n}}{\phi(\cdot,x_i)}.
$$
The utility of the KME derives from the fact that for certain kernels,
$\Psi$ is injective. This permits the treatment of probability
distributions as objects in a Hilbert space, which allows many existing
machine learning methods to be applied in problems where probability
distributions play the role of feature vectors \cite{smola2007hilbert,
gretton2012kernel, fukumizu2011kernel,gurram2012contextual}. For example,
suppose that random samples of size $n$ are available from several
probability distributions $P_1,\ldots,P_N$. A KME-based algorithm will require the computation of all
pairs of inner products of kernel mean embeddings of these distributions.
If $x_1, \dots, x_n \sim P$ and $x_1', \ldots, x_n' \sim P'$, then
$\langle \widehat{\Psi}(P), \widehat{\Psi}(P') \rangle = \frac1{n^2}
\sum_{i,j} \phi(x_i, x_j')$ by the reproducing property. Therefore the
calculation of all pairwise inner products of kernel mean embeddings requires $O(N^2 n^2)$ kernel
evaluations. On the other hand, if we have sparse representations of the
kernel means, these pairwise inner products can be calculated with only
$O(N^2 k^2)$ kernel evaluations, a substantial computational savings. In
our experiments below, we demonstrate the computational savings of our
approach in KME-based algorithms for the embedding of probability
distributions and class-proportion estimation.

\subsection{Generalized Notion of Kernel}

The problem of sparsely approximating a sample mean
can be addressed more generally in an inner product space. This motivates
the following definition of kernel, which is satisfied by both density
estimation kernels and symmetric positive definite kernels.
\begin{defn}
\label{def:kernel}
We say that $\kernel$ is a {\em kernel} if there exists an inner product
space $\sH$ such that for all $x$ in $\bbR^d$, $\phi(\cdot,x) \,\, \in \sH$.
\end{defn}
In the case of kernel density estimation, all commonly used kernels satisfy $\phi(\cdot, x) \in L^2(\rd)$ for all $x \in \rd$. Recalling that $L^2(\rd)$ consists of equivalence classes of functions, when we write $\phi(\cdot, x) \in L^2(\rd)$, we view $\phi(\cdot,x)$ as a representative of its equivalence class. In the case of the kernel mean embedding, we may simply take $\sH$ to be the RKHS associated with $\phi$.

Our proposed methodology applies to kernels of a particular form, given by
the following definition.
\begin{defn} \label{def:radial}
We say $\kernel$ is a {\em radial kernel} if $\phi$ is a kernel as in Def. \ref{def:kernel} and there exists a strictly decreasing function $g: [0,\infty) \to \bbR$ such that, for all $x, x' \in \rd$,
$$
\langle \phi(\cdot, x), \phi(\cdot, x') \rangle_{\sH} = g(\| x - x' \|_2
).
$$
\end{defn}

We now review some common examples of radial kernels.
The Gaussian kernel with parameter $\sigma > 0$ has the form
$$
\phi(x,x') = c_\sigma \exp{\paren{-\frac{\norm{x-x'}_2^2}{2\sigma^2}}},
$$
the Laplacian kernel with parameter $\gamma >0$ has the form
$$
\phi(x,x') =c_\gamma \exp{\paren{-\frac{\norm{x-x'}_2}{\gamma}}},
$$
and the Student-type kernel with parameters $\alpha, \beta >0$ has the
form
$$
\phi(x,x')=c_{\alpha,\beta}\paren{1+\frac{\norm{x-x'}_2^2}{\beta}}^{-\alpha}.
$$
The parameters $c_\sigma, c_\gamma$ and $c_{\alpha,\beta}$ can be set to 1
for the KME, or so as to normalize $\phi$ to be a density estimation
kernel, depending on the application.

These examples illustrate that the space $\sH$ such that $\phi(\cdot, x) \in
\sH$ is not unique. Indeed, each of these three kernels is a
symmetric positive definite kernel, and therefore we may take $\sH$ to
be the RKHS associated with $\phi$ \cite{steinwart08, scovel2010radial}. On the other hand, we may also select
$\sH = L^2(\rd)$.

Each of these three examples is also a radial kernel. If we take $\sH$ to
be the RKHS, then by the reproducing property we simply have $\langle
\phi(\cdot, x), \phi(\cdot, x') \rangle = \phi(x,x')$, and in each case,
$\phi(x,x') = g(\|x - x'\|)$ for some strictly decreasing $g$. These
kernels are also radial if we take $\sH = L^2(\rd)$. For example, consider
the Gaussian kernel, and let us write $\phi = \phi_\sigma$ to indicate the
dependence on the bandwidth parameter. Then $\langle \phi_\sigma(\cdot,
x), \phi_\sigma(\cdot, x') \rangle_{L^2} = \phi_{\sqrt{2} \sigma}(x,x')$.
Similarly, for the Student kernel with $\alpha = (1+d)/2$ (the Cauchy
kernel), we have $\langle \phi_\beta(\cdot, x), \phi_\beta(\cdot, x')
\rangle_{L^2} = \phi_{2 \beta}(x,x')$. For other kernels, although there may not
be a closed form expression for $g$, it can still be argued that such a
$g$ exists, which is all we will need.

\section{Abstract Problem Formulation}
\label{sec:abstractformulation}
In the interest of generality and clarity, we consider the problem of
sparsely approximating a sample mean in a more abstract setting. Thus, let
$\paren{\sH, \inpr{\cdot,\cdot}}$ be an inner product space with induced
norm $\norm{\cdot}_\sH$, and let $\set{z_1,\dots,z_n} \subset \sH$. For $\alpha
\in \bbR^n$, define $\norm{\alpha}_0:=\abs{\set{i \mid \alpha_i \neq 0}}$.
Given an integer $k\leq n$, our objective is to approximate the sample
mean $\zbar = \frac1{n} \sum_i z_i$ as a $k$-sparse linear combination of
$z_1,\ldots,z_n.$ In particular, we want to solve the problem
\begin{align} \label{eq:minproblem}
\text{minimize } & \norm{\zbar-z_\alpha}_\sH \\
\text{subject to } & \norm{\alpha}_0=k \nonumber
\end{align}
where $z_\alpha=\sum_{i\in\brac{n}}\alpha_i z_i$.

Note that problem \eqref{eq:minproblem} is of the form of the standard
sparse approximation problem \cite{tropp2004greed}, where $\{z_1, \ldots,
z_n\}$ is the so-called {\em dictionary} out of which the sparse
approximation is built. Later we argue that existing sparse approximation
algorithms are not suitable from a scalability perspective.  Instead, we
develop an approach that leverages the fact that the vector being sparsely
approximated is the sample mean of the dictionary elements. We are most interested in the case where $z_i = \phi(\cdot,x_i)$ and $\phi$ is a kernel, but the discussion in Section \ref{sec:subsetselection} is held in a more abstract sense.

\section{Related Work and Contributions} \label{sec:relatedwork}
Problem \eqref{eq:minproblem} is a specific case of the sparse approximation problem. Since in general it is NP-hard many efforts have been made to approximate its solution in a feasible amount of time. See \cite{tropp2004greed} for an overview. A standard method of approximation is Matching Pursuit. Matching Pursuit is a greedy algorithm originally designed for finite-dimensional signals. Following the notation of Problem \eqref{eq:minproblem} let $\zbar$ be the target vector we wish to approximate. In Matching Pursuit the first step is to pick an ``atom" in $\set{z_1,\dots,z_n}$ which captures most of $\zbar$ as measured by the magnitude of the inner product. After this first step the subsequent atoms are iteratively chosen according to which one captures more of the portion of $\zbar$ that hasn't been accounted for \cite{mallat1993matching}. Note that just the first step of this algorithm requires to compute, for each $z_i$, the quantity $\inpr{\zbar,z_i} = \frac{1}{n} \sum_{j \in [n]} \inpr{z_i,z_j}$. Since we have $n$ $z_i$'s, the first step already takes $\Omega(n^2)$ kernel evaluations, which is undesirable. Another common approach, Basis Pursuit, has similar time complexity.

Several algorithms which focus specifically on the sparse KDE case have been developed. In \cite{jeon1994fast} a clustering method is used to approximate the KDE at a point by rejecting points which fail  to belong to close clusters. In \cite{girolami2003probability} a relevant subset of the data is chosen to minimize the $L^2$ error but at an expensive $O(n^2)$ cost. In \cite{chen2008orthogonal, schaffoner2007memory} a regression based approach is taken to estimate the KDE through its cumulative density function. Notice these algorithms rely heavily on the assumption that the KDE represents a probability distribution, so cannot be generalized to other kernel means.

When the kernel mean is thought of as a mixture model, the model can be collapsed into a simpler one by reducing the number of its components through a similarity based merging procedure \cite{scott2001kernels, runnalls2007kullback, schieferdecker2009gaussian}. Since these methods necessitate the computation of all pairwise similarities, they present quadratic computational complexity. EM algorithms for this task result in similar computational requirements \cite{figueiredo2002unsupervised, bruneau2010parsimonious}.

A line of work which tries to speed up general kernel sums comes historically from $n$-body problems in physics, and makes use of fast multipole methods \cite{greengard1987fast, gray2000nbody}. The general idea behind these methods is to represent the kernel in question by a truncated series expansion, and then use a space partitioning scheme to group points, yielding an efficient way to approximate group-group or group-point interactions, effectively reducing the number of kernel evaluations. These methods are usually kernel-dependent and do not yield a valid density. For the case of the Gaussian kernel, see \cite{yang2003improved,lee2006dualfgt} for two different space partitioning methods. Note that, contrary to these methods, our approach can still yield a valid density (discussed below), and can therefore be used to estimate quantities like the KL divergence.

The efforts of rapidly approximating general kernel based quantities have led to the use of $\epsilon$-samples, or coresets. To define $\epsilon$-samples, first denote the data $A := \data$ and the kernel quantity of interest $Q(A,x)$, where $x$ is some query point (for example, the KDE is $Q(A,x) = \frac{1}{n} \sum_{i\in [n]} \phi(x_i,x)$). An $\epsilon$-sample is a set $A' \subset A$ such that, for every query point $x$, $Q(A,x)$ and $Q(A',x)$ differ by less than $\epsilon$ with respect to some norm. See \cite{zheng2013quality, phillips2013epsilon} for the KDE case with $\ell_\infty$ norm. For other kernel quantities, in specific the KME using the RKHS norm, see \cite{Joshi2011comparing}. Both cases allow for constructions of $\epsilon$-samples in near linear time with respect to the data size and $1/\epsilon$. Notice that our approach has the advantage that it handles both the KDE and KME cases simultaneously, and that if desired it can yield a valid density as the approximation.

Although most of the literature seems to concentrate on the KDE, there have also been efforts to speed up computation time in problems involving the KME. As in the $\epsilon$-sample approach above, many of these problems require the distance between KMEs in the RKHS, so they focus on speeding up this calculation. In \cite{zhao2014fastmmd}, for example, a fast method is devised for the specific case of the maximum mean discrepancy statistic used for the two-sample test.

Computing the kernel mean at each of the original points $\data$ can be thought of as a matrix vector multiplication, where the matrix in question is the kernel matrix. Therefore, an algebraic approach to this problem consists of choosing a suitable subset of the matrix columns and then approximating the complete matrix only through these columns. Among the most common of these is the Nystr\"om method. In the Nystr\"om method the kernel matrix $K$ is approximated by the matrix $QW_r^+Q^T$, where $Q$ is composed of a subset of the columns of $K$, W is those columns intersected with their corresponding rows, and $W_r^+$ the best $r$-rank approximation to its pseudoinverse (see \cite{drineas2005nystrom} for details). The columns composing $Q$ are typically chosen randomly under some sampling distribution. See \cite{kumar2012sampling} for some examples of sampling distributions. As explained in Section \ref{sec:nystrom}, our approach is connected to the Nystr\"om method and can be viewed as a particular scheme for column selection tailored to kernel means. The Nystr\"om approximation of the kernel matrix is not the only one used though, and other algebraic approaches exist. In \cite{march2014far} for example, an interpolative decomposition of the kernel matrix is proposed.

In \cite{noumir2012one} a ``coherence" based sparsification criterion is used in the context of one-class classification. The main idea is that each set of possible atoms $\set{z_i | \alpha_i\neq 0}$ can be quantified by the largest absolute value of the inner product between two different atoms. The method proposed requires the computation of the complete kernel matrix, and is therefore not suitable for our setting, which involves large data. The motivation for their coherence criterion, however, lies in the minimization of a bound on the approximation error. As seen in Section \ref{sec:bound}, we propose a similar bound as a starting point for our algorithm.

\subsection*{Contributions} \label{sec:contributions}
We list a summary of contributions in this paper.
\begin{itemize}
\item We present a bound on the sparse approximation error based on a novel measure of incoherence.
\item We recognize that for radial kernels, minimizing the bound is equivalent to solving an instance of the $k$-center problem. The solution to the $k$-center problem, in turn, can be approximated by a linear running time algorithm.
\item Our method for approximating the KDE can be implemented so that the sparse kernel mean is a valid density function, which is important for some applications.
\item Our method provides amortization of computational complexity since the calculation of the set $\sI$ (introduced below) is only computed once. Many subsequent calculations (e.g., kernel bandwidth search) can then be performed at a relatively small or negligible cost.
\item Our method is flexible in that it addresses different types of kernel means. In particular, it can be used to approximate both KMEs and KDEs.
\item Our method provides a scheme to automatically select the sparsity level.
\item We demonstrate the improved performance of the proposed method in three different applications: Euclidean embedding of probabilities (using both the KDE and the KME), class proportion estimation (using the KME), and clustering with the mean-shift algorithm (using the KDE).
\end{itemize}

\section{Subset Selection and Incoherence-Based Bound} \label{sec:subsetselection}
Let us now reformulate problem \eqref{eq:minproblem}. Our approach will be to separate the problem into two parts: that of finding the set of indices $i$ such that $\alpha_i$ is not zero, and that of finding the value of the nonzero $\alpha_i$'s. Letting $\sI \subset \brac{n}$ denote an index set, we can pose problem \eqref{eq:minproblem} as
\begin{equation} \label{eq:minalpha}
\underset{|\sI|=k}{\underset{\sI \subseteq [n]} {\min}}   \, \, \underset{(\alpha_i)_{i \in \sI}}\min   \| \zbar-\underset{i \in \sI}\sum \alpha_i z_i\|^2 \, .
\end{equation}
Note that the inner optimization problem is unconstrained and quadratic, and its solution, which for fixed $\sI$ and $k$ we denote by $\alpha_\sI \in \bbR^k$, is
$$
\alpha_\sI = K_\sI^{-1}\kappa_\sI,
$$
where $K_\sI = \paren{\inpr{z_i,z_j}}_{i,j \in \sI}$ and $\kappa_\sI$ is the $k$-dimensional vector with entries $\frac{1}{n}\sum_{j\in\brac{n}} \inpr{z_j,z_l}$, $\, l \in \sI$.

Let $\alpha_\sI=\paren{\alpha_{\sI,i}}_{i\in\sI}$ and $z_\sI=\sum_{i\in\sI}\alpha_{\sI,i}z_i$. Then we can rewrite problem \eqref{eq:minproblem} as
\begin{equation} \label{eq:mini}
\underset{\abs{\sI}=k}{\underset{\sI \subseteq [n]} \min} \norm{\zbar - z_\sI}.
\end{equation}

\subsection{Connection to the Nystr\"om Method} \label{sec:nystrom}
Before continuing to the approximate solution of problem \eqref{eq:mini}, we briefly highlight its relationship to the Nystr\"om method. Given a set $\sI \subset [n]$, let $K$ be the kernel matrix of $\set{z_i | i \in [n]}$, $K:=(\inpr{z_i,z_j})_{i,j\in [n]}$, and $K_\sI$ the kernel matrix of $\set{z_i | i \in \sI}$, $K_\sI:=(\inpr{z_i,z_j})_{i,j\in \sI}$. Also, let $Q_\sI$ be the binary matrix such that $KQ_\sI$ is composed of the columns of $K$ corresponding to $\sI$. Then we can rewrite $\alpha_\sI$ and $K_\sI$ as $\alpha_\sI = \paren{Q_\sI^TKQ_\sI}^{-1}Q_\sI^TK\uno_n$ and $K_\sI=Q_\sI^TKQ_\sI$, where $\uno_n$ denotes the vector in $\bbR^n$ with entries $1/n$. By doing so, we can express the objective of \eqref{eq:mini} as
\begin{align}
\norm{\zbar - z_\sI}^2  & = \uno_n^T\paren{K-KQ_\sI K_\sI^{-1}Q_\sI^TK^T}\uno_n \nonumber \\
& = \uno_n^T\paren{K-\tilde{K}_\sI}\uno_n . \nonumber
\end{align}
where $\tilde{K}_\sI:=KQ_\sI K_\sI^{-1}Q_\sI^TK^T$. We recognize $\tilde{K}_\sI$ as the Nystr\"om matrix from the Nystr\"om method \cite{kumar2012sampling}, which is the only term dependent on $\sI$ in the objective. Therefore, our work can be interpreted from the Nystr\"om perspective: choose suitable columns of $K$ and approximate $K$ through the Nystr\"om matrix. The main difference is that the resulting approximation is based on the induced norm of the inner product space where the $z_i$'s reside, instead of the commonly used spectral and Frobenius norms.

\subsection{An Incoherence-based Sparse Approximation Bound}
\label{sec:bound}
We now present our proposed algorithm to approximate the solution of problem \eqref{eq:mini}. Our strategy is to find an upper bound on the term $\norm{\zbar - z_\sI}$ which is dependent on $\sI$ and then find the $\sI$ that minimizes the bound. First, we present a lemma which will aid us in finding the bound.

\begin{lemma} \label{lemma}
Let $\paren{\sH,\inpr{\cdot,\cdot}}$ be an inner product space. Let $S$ be a finite dimensional subspace of $\sH$ and $P_S$ the projection onto $S$. For any $z_0 \in \sH$
$$
\norm{P_Sz_0} = \max_{z \in S, \norm{z}=1} \inpr{z_0,z}.
$$
\end{lemma}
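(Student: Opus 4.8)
The plan is to prove the identity $\norm{P_S z_0} = \max_{z \in S, \norm{z}=1} \inpr{z_0,z}$ by establishing the two inequalities separately, using the defining properties of the orthogonal projection $P_S$: namely that $P_S z_0 \in S$ and that the residual $z_0 - P_S z_0$ is orthogonal to every element of $S$.

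First I would handle the direction $\norm{P_S z_0} \le \max_{z \in S, \norm{z}=1} \inpr{z_0,z}$. If $P_S z_0 = 0$ the claim is trivial, so assume otherwise and take the specific unit vector $z^\ast := P_S z_0 / \norm{P_S z_0}$, which lies in $S$ since $S$ is a subspace. Then I would compute $\inpr{z_0, z^\ast} = \inpr{z_0, P_S z_0}/\norm{P_S z_0}$, and use the orthogonality of the residual, $\inpr{z_0 - P_S z_0, P_S z_0} = 0$, to replace $\inpr{z_0, P_S z_0}$ by $\inpr{P_S z_0, P_S z_0} = \norm{P_S z_0}^2$. This yields $\inpr{z_0, z^\ast} = \norm{P_S z_0}$, so the maximum over unit vectors in $S$ is at least $\norm{P_S z_0}$.

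Next I would handle the reverse direction $\inpr{z_0, z} \le \norm{P_S z_0}$ for every unit vector $z \in S$. The key step is to decompose $\inpr{z_0, z} = \inpr{P_S z_0, z} + \inpr{z_0 - P_S z_0, z}$; since $z \in S$ and the residual is orthogonal to $S$, the second term vanishes, giving $\inpr{z_0, z} = \inpr{P_S z_0, z}$. Applying the Cauchy--Schwarz inequality together with $\norm{z} = 1$ gives $\inpr{P_S z_0, z} \le \norm{P_S z_0}\,\norm{z} = \norm{P_S z_0}$. Taking the maximum over all such $z$ preserves the bound, and combining the two directions gives the equality (the supremum being attained by $z^\ast$ justifies writing $\max$ rather than $\sup$).

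The only point requiring care — and the main thing I would want to pin down — is the justification that $P_S$ has the stated properties (existence of the projection and orthogonality of the residual to all of $S$). Since $S$ is assumed \emph{finite dimensional}, $P_S$ is well defined even though $\sH$ is only an inner product space rather than a complete Hilbert space: one can take an orthonormal basis of $S$ and define $P_S z_0$ as the corresponding finite sum of coefficients, from which orthogonality of $z_0 - P_S z_0$ to each basis vector, and hence to all of $S$ by linearity, follows directly. This finite dimensionality is exactly what prevents completeness issues, so I would make sure to invoke it explicitly rather than appealing to a general Hilbert-space projection theorem.
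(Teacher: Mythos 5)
Your proof is correct and follows essentially the same route as the paper's: decompose $\inpr{z_0,z}=\inpr{P_Sz_0,z}+\inpr{z_0-P_Sz_0,z}$, kill the second term by orthogonality of the residual, apply Cauchy--Schwarz, and exhibit the maximizer $z^\ast=P_Sz_0/\norm{P_Sz_0}$. Your added care about why $P_S$ exists with the orthogonality property in a possibly incomplete inner product space (via an orthonormal basis of the finite-dimensional $S$) is a nice refinement of the paper's bare appeal to ``the Projection Theorem,'' but it is not a different argument.
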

\begin{proof}
First note that since $S$ is finite dimensional, by the Projection Theorem $z_0 - P_Sz_0$ is orthogonal to $S$. Now, for any $z\in S$ with $\norm{z}=1$, we have
\begin{align*}
\inpr{z_0,z} &= \inpr{P_Sz_0 + (z_0-P_Sz_0),z} \\
&= \inpr{P_Sz_0,z} + \inpr{z_0-P_Sz_0,z} \\
&= \inpr{P_Sz_0,z} \\
&\leq \norm{P_Sz_0}\norm{z} = \norm{P_Sz_0},
\end{align*}
where we have used the Cauchy-Schwartz inequality. To confirm the existence of a vector $z$ which makes it an equality and therefore reaches the maximum, just let $z = P_Sz_0 / \norm{P_Sz_0}$.
\end{proof}

We can now present the theorem which will be the basis for our minimization approach. First, define
$$
\nu_{\sI}:=\underset{j\notin\sI}{\min}\,\,\underset{i\in\sI}{\max}\,\,\inpr{z_i,z_j},
$$
which we can think of as a measure of the ``incoherence" of $\set{z_i \mid i\in\sI}$. It is now possible to establish a bound:
\begin{thm}\label{bound}
Assume that for some $C>0$ $\inpr{z_i,z_i}=C \,\, \forall i \in \brac{n}$. Then for every $\sI\subseteq\brac{n}$,
$$
\norm{\zbar - z_\sI} \leq \paren{1-\frac{\abs{\sI}}{n}}\sqrt{\frac{1}{C}\paren{C^2 - \nu_\sI^2}} .
$$
\end{thm}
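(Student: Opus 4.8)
The plan is to exploit the fact that $\zI$ is precisely the orthogonal projection of $\zbar$ onto the finite-dimensional subspace $S := \mathrm{span}\set{z_i \mid i \in \sI}$. Indeed, the inner minimization in \eqref{eq:minalpha} produces the least-squares coefficients $\alpha_\sI = K_\sI^{-1}\kappa_\sI$, so $\zI = P_S\zbar$ and the residual $\zbar - \zI = \zbar - P_S\zbar$ is orthogonal to $S$. The whole argument then rests on controlling the norm of this residual through Lemma \ref{lemma} and the incoherence quantity $\nu_\sI$.

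First I would write $\zbar - P_S\zbar = \frac{1}{n}\sum_{i\in\brac{n}}(z_i - P_S z_i)$ by linearity of $P_S$, and observe the key cancellation: for every $i \in \sI$ we have $z_i \in S$, hence $z_i - P_S z_i = 0$. Only the $n - \abs{\sI}$ terms with $j \notin \sI$ survive, giving $\zbar - P_S\zbar = \frac{1}{n}\sum_{j\notin\sI}(z_j - P_S z_j)$. Applying the triangle inequality over these $n - \abs{\sI}$ terms already produces the factor $\paren{1 - \abs{\sI}/n}$, reducing everything to a uniform estimate of the per-point residual $\norm{z_j - P_S z_j}$ for $j \notin \sI$.

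Next I would bound each such residual. By the Projection Theorem (Pythagoras), $\norm{z_j - P_S z_j}^2 = \norm{z_j}^2 - \norm{P_S z_j}^2 = C - \norm{P_S z_j}^2$, using the equal-norm hypothesis $\inpr{z_j,z_j} = C$. To lower bound $\norm{P_S z_j}$, I would invoke Lemma \ref{lemma}: for any fixed $i \in \sI$ the vector $z_i/\sqrt{C} \in S$ is feasible and of unit norm, so $\norm{P_S z_j} \geq \frac{1}{\sqrt{C}}\inpr{z_i,z_j}$; maximizing over $i \in \sI$ and using $\nu_\sI = \min_{j\notin\sI}\max_{i\in\sI}\inpr{z_i,z_j}$ yields $\norm{P_S z_j} \geq \nu_\sI/\sqrt{C}$. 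Squaring and substituting gives $\norm{z_j - P_S z_j}^2 \leq C - \nu_\sI^2/C = \frac{1}{C}\paren{C^2 - \nu_\sI^2}$, and combining with the triangle-inequality step completes the claimed bound.

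The step needing the most care is the squaring in the per-point estimate: the inequality $\norm{P_S z_j} \geq \nu_\sI/\sqrt{C}$ upgrades to $\norm{P_S z_j}^2 \geq \nu_\sI^2/C$ only when $\nu_\sI \geq 0$. This is automatic for the radial kernels of interest, whose inner products are positive, and Cauchy--Schwarz guarantees $\abs{\nu_\sI} \leq C$ so that the radicand $C^2 - \nu_\sI^2$ is nonnegative in any case; I would flag the nonnegativity of $\nu_\sI$ as the operative assumption there. The conceptual crux, by contrast, is the cancellation in the second paragraph: recognizing that the in-set terms project to themselves is what turns a seemingly crude triangle-inequality argument into a bound that sharpens as $\abs{\sI}$ grows.
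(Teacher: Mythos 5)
Your proof is correct and follows essentially the same route as the paper's: decompose $\zbar - P_{S_\sI}\zbar$ by linearity, cancel the in-set terms, apply the triangle inequality to get the $\paren{1-\abs{\sI}/n}$ factor, and control each residual via Pythagoras and the Lemma~\ref{lemma} lower bound $\norm{P_{S_\sI}z_j}\geq \nu_\sI/\sqrt{C}$. Your flag that squaring this lower bound requires $\nu_\sI \geq 0$ is a point the paper leaves implicit, and is worth noting since the bound as stated silently assumes it.
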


\begin{proof}
The beginning of this proof is similar to the one in \cite{noumir2012one}.
Let $S_\sI :=\text{span}(\{ z_i \, | \, i \in \sI \})$ and denote $P_{S_\sI}$ the projection operator onto $S_\sI$ and $I$ the identity operator. We have
\begin{align*}
\| \zbar-z_\sI\| =\| \zbar-P_{S_\sI} \zbar\| = \frac{1}{n}\| \sum_{i \in [n]} (I-P_{S_\sI}) z_i\|
\\ \leq\frac{1}{n}\sum_{i \in [n]}\| (I-P_{S_\sI})z_{i}\| =\frac{1}{n}\underset{i\notin\sI}{\sum}\|(I-P_{S_\sI})z_i\|
\end{align*}
where we have used the triangle inequality, and the last equality is due to the fact that $z_i=P_{S_\sI} z_i$ when $z_i\in S_\sI$.

Now, since $(z_i-P_{S_\sI} z_i) \perp P_{S_\sI} z_i$, we can use Pythagoras' Theorem in $\sH$ to get $\|z_i - P_{S_\sI} z_i\|^2 = \|z_i\|^2 - \|P_{S_\sI} z_i\|^2$.

By Lemma \ref{lemma}, $\| P_{S_\sI} z_{i}\| =\underset{z\in S_\sI,\,\| z\| =1}{\max}\left< z_{i},z\right> $. Therefore, for $i \notin \sI$,
\begin{align*}
\| P_{S_\sI} z_i \| &= \frac1{\sqrt{C}} \, \underset{z \in S_\sI, \| z\| = \sqrt{C}}{\max} \, \langle z_i, z \rangle   \\
&\ge \frac1{\sqrt{C}} \, \underset{\ell \in \sI}{\max} \, \langle z_i, z_\ell \rangle \\
&\ge \frac1{\sqrt{C}} \, \underset{j \notin \sI}{\min} \, \underset{\ell \in \sI}{\max} \, \langle z_j, z_\ell \rangle = \frac1{\sqrt{C}} \, \nu_\sI.
\end{align*}
Thus, for $i \notin \sI$,
$$
\| z_i \|^2 - \| P_{S_\sI} z_i \|^2 \le C - \frac{\nu_\sI^2}{C}
$$
and finally
$$
\| \bar{z}-z_\sI \| \leq \frac{1}{n} \underset{i \notin \sI}{\sum}\sqrt{C-\frac{\nu_\sI^{2}}{C}}
=\left(1-\frac{|\sI |}{n}\right)\sqrt{\frac{1}{C}(C^{2}-\nu_\sI^2)} \, .
$$
\end{proof}

\section{Bound Minimization Via $k$-center Algorithm} \label{sec:skm}
In this section we apply the previous result in the context of approximating a kernel mean based on a radial kernel. Recall that, in the kernel mean setting, $z_i = \phi(\cdot,x_i)$ and $\inpr{\phi(\cdot,x_i),\phi(\cdot,x_j)} = g(\norm{x_i-x_j}_2)$, where $\phi$ is a radial kernel, $\data \subset \rd$, and $g$ is strictly decreasing as in Definition \ref{def:radial}. Also note that for any radial kernel the assumption in Theorem \ref{bound} is satisfied, since $\inpr{\phi(\cdot,x_i),\phi(\cdot,x_i)} = g(0)=C>0$.

Define the set $\sI^*$ as
$$
\sI^*:=\arg\,\, \underset{\abs{\sI}=k} { \underset{\sI \subseteq [n]} {\min} }\,\,\underset{j\notin\sI}{\max}\,\,\underset{i\in\sI}{\min}\,\,\norm{ x_{i}-x_{j}}.
$$
Then, since $\phi$ is a radial kernel and $g$ is strictly decreasing, $\sI^*$ also maximizes $\nu_\sI = \underset{j\notin\sI}{\min}\,\,\underset{i\in\sI}{\max}\,\, g(\norm{x_i-x_j})$. Therefore, $\sI^*$ is the set that minimizes the bound in Theorem \ref{bound}. We have translated a problem involving inner products of functions to a problem involving distances between points in $\rd$.

The problem of finding $\sI^*$ is known as the $k$-center problem. To pose the $k$-center problem more precisely, we make a few definitions. For a fixed $\sI$, let $X_\sI=\set{x_i \mid i\in\sI}$ and $Y_\sI=\set{x_j \mid j\notin \sI}$, and for all $x_j\in Y_\sI$ define its distance to $X_\sI$ as $d(x_j,X_\sI)=\underset{x_i\in X_\sI}\min\norm{x_i-x_j}$. Furthermore, let $W(X_\sI)=\underset{x_j\in Y_\sI}\max d(x_j,X_\sI)$. Therefore, the $k$-center problem is that of finding the set $\sI$ of size $k$ for which $W(X_\sI)$ is minimized.

The $k$-center problem is known to be NP-complete \cite{vazirani2001approximation}. However, there exists a greedy 2-approximation algorithm \cite{gonzalez1985clustering} which produces a set $\sI_k$ such that $W(X_{\sI_k})\leq2W(X_{\sI^*})$. This algorithm is optimal in the sense that under the assumption that P$\neq$NP there is no $\rho$-approximation algorithm with $\rho<2$ \cite{hochbaum1996approximation}. The algorithm is described in Fig. \ref{alg:kalg}, and as can be seen, it has a linear time complexity in the size of the data $n$. In particular, the algorithm runs in $O(nkd)$ time.

\begin{figure}[htp]
\begin{algorithmic}
\STATE {\bf input} $x_1, \ldots, x_n, k$
\STATE $X \longleftarrow \varnothing$
\STATE $Y \longleftarrow \{ x_1, \ldots, x_n\}$
\STATE Choose randomly a first index $u \in [n]$
\STATE $X \longleftarrow X \cup \{ x_u\}$
\STATE $Y \longleftarrow Y \backslash \{ x_u\}$
\WHILE{$|X|<k$}
\STATE	Choose the element $y\in Y$ for which $d(y,X)$ is maximized
\STATE	$X \longleftarrow X \cup \{ y\}$
\STATE	$Y \longleftarrow Y \backslash \{ y\}$
\ENDWHILE
\STATE {\bf output} {$\sI_k = \{ i \in [n] \, | \, x_i \in X \}$}
\end{algorithmic}
\caption{A linear time $2$-approximation algorithm for the $k$-center problem.}
\label{alg:kalg}
\end{figure}

\subsection{Computation of $\alpha_\sI$ and Auto-selection of $k$}
\label{sec:autoselect}
The $k$-center algorithm allows us to find the set $\sI$ on which our approximation will be based. After finding $\sI$ we can determine the optimal coefficients $\alpha_\sI$. Since the main computational burden is in the selection of $\sI$, we now have the freedom to explore different values of $\alpha_\sI$ in a relatively small amount of time. For example, we can compute $\alpha_\sI$ for each of several possible kernel bandwidths $\sigma$.

The optimal way to compute $\alpha_\sI$ depends on the application. If the user has a good idea of what the value of $k$ is, then a fast way to compute $\alpha_\sI$ for that specific value is to apply their preferred method to solve the equation $K_\sI \alpha_\sI = \kappa_\sI$. For example, since for symmetric positive definite kernels the kernel matrix is positive semi-definite, the preconditioned conjugate gradient method can be used to quickly obtain $\alpha_\sI$ to high accuracy. This approach has the advantages of being simple and fast.

A further advantage of our method is evident when the user has access only to a maximum tolerance value of $k$, say $k_{max}$, but desires to stop at a value $k_0\leq k_{max}$ which performs as well as $k_{max}$. To do this, at iteration $m\geq 1$ in the $k$-center algorithm we compute $\alpha_{\sI_m}$ right after computing $\sI_m$, which provides a record of all the $\alpha_{\sI_j}$ for $1\leq j \leq k_0$. To find $k_0$, we use the information from the computed coefficients to form an error indicator and stop when some error threshold is overcome. Before showing what these error indicators are, we first provide an update rule to efficiently compute the $\alpha$ coefficients at each iteration step.

Let $\sI_{m}$ be the set of the first $m$ elements chosen by the $k$-center algorithm, and let $\alpha_{\sI_m}$, $K_{\sI_m}$ and $\kappa_{\sI_m}$ be obtained by using $\sI_{m}$. If we increase the number of components to $m+1$, then as shown in \cite{noumir2012one} we have
$$
K_{\sI_{m+1}} = \begin{bmatrix}
K_{\sI_m} & b \\
b^T & \phi(x_{j_{m+1}},x_{j_{m+1}})
\end{bmatrix}
$$
where $x_{j_{\ell}}$ is the $\ell^{th}$ element selected by the $k$-center algorithm, and $b=(\phi(x_{j_{m+1}},x_i))_{i\in  \sI_m}$. The resulting update rule for the inverse is
$$
K_{\sI_{m+1}}^{-1} = \begin{bmatrix}
K_{\sI_m}^{-1} & 0 \\
0 & 0
\end{bmatrix}
+ q_0 (qq^t)
$$
where $q_0 = 1/(\phi(x_{j_{m+1}},x_{j_{m+1}}) -b^TK_{\sI_m}^{-1}b)$ and $q = \brac{-b^TK_{\sI_m}^{-1} \,\,\,\,\,\, 1}^T$. From here the user can now compute $\alpha_{\sI_{m+1}}$ by multiplying $K_{\sI_{m+1}}^{-1}$ with
$$
\kappa_{\sI_{m+1}} = \begin{bmatrix}
\kappa_{\sI_m} \\
\frac{1}{n} \sum_{i=1}^{n} \phi(x_{j_{m+1}},x_i)
\end{bmatrix}.
$$
Assuming we stop at some $k_{max}$, the time complexity for computing all the $\alpha_{\sI_m}$'s is $O(k_{max}^3)$ and the necessary memory $O(k_{max}^2)$.

To automatically stop at some $k_0\leq k_{max}$ we need a stopping criterion based on some form of error. We propose the following: using the notation of problem \eqref{eq:mini} we have that
\begin{align*}
&\norm{\zbar {-} z_\sI}^2{=}\:\inpr{\frac{1}{n}\sum_{\ell\in [n]} z_\ell, \frac{1}{n}\sum_{\ell'\in [n]} z_{\ell'} }\nonumber\\
&{-}\:2\inpr{\frac{1}{n}\sum_{\ell\in [n]}z_\ell, \sum_{i\in\sI}\alpha_{\sI,i}z_i}{+}\:\inpr{\sum_{i\in\sI}\alpha_{\sI,i}z_i,\sum_{j\in\sI}\alpha_{\sI,j}z_j} \nonumber \\
&{=}\:\norm{\zbar}^2 - 2 \cdot \sum_{i\in\sI}\alpha_{\sI,i} \cdot \frac{1}{n}\sum_{\ell\in[n]}\inpr{z_\ell,z_i} + \alpha_\sI^T K_\sI \alpha_\sI \nonumber \\
&{=}\:\norm{\zbar}^2 - \alpha_\sI^T\kappa_\sI.
\end{align*}
Since $\norm{\zbar}^2$ is a constant independent of $\sI$, we can avoid its $O(n^2)$ computation and only use the quantities $E_{\abs{\sI}}:=-\alpha_\sI^T\kappa_\sI$ as error indicators. Note that $E_{t}$ is nonincreasing with respect to $t$. Based on this we choose $k_0$ to be the first value at which some relative error is small. In this paper we used the test
$$
\frac{\abs{E_{k_0-1} - E_{k_0}}}{\abs{E_1-E_{k_0}}} \leq \epsilon
$$
for some small $\epsilon$. The overall complexity amounts to $O(nk_0d + k_0^3d)$.

A further consideration for computing $\alpha_\sI$ should be made if the result is desired to be a probability mass function. In this case a $k$-dimensional $\alpha_\sI$ can be projected into the simplex $\Delta^{k-1}:=\set{\nu\in\bbR^k|\sum_{i=1}^{k}\nu_i=1, \nu_i\geq 0 \,\,\,\forall\,\,\, 1\leq i\leq k}$ after being obtained by any of the discussed methods (see \cite{duchi2008efficient}). Alternatively, a quadratic program which takes into account the constraints of non-negativity and $\sum_{i=1}^{k}\alpha_{\sI,i} = 1$ can be solved.

A Matlab implementation of the complete Sparse Kernel Mean procedure can be found at \cite{scott0000website}.

\section{Experiments: Speeding Up Existing Kernel Mean Methods} \label{sec:experiments}
We have implemented our approach in three specific machine learning tasks that require the computation and evaluation of a mean of kernels. In the first of these, we apply our algorithm to the task of dimensionality reduction. In the second, we use it in the setting of class proportion estimation. Finally, we explore its performance when used as part of the mean shift algorithm.

In the following we refer to our algorithm or to the resulting kernel mean as SKM (for Sparse Kernel Mean). We now provide a detailed description of each task and relevant results. The implementation has been done in Matlab.

\subsection{Euclidean Embedding of Distributions} \label{sec:fcyto}
In this experiment we embed probability distributions in a lower dimensional space for the purpose of visualization. Given a collection of $N$ distributions $\set{P_1,\dots,P_N}$, the procedure consists of creating a similarity matrix for some notion of similarity among these distributions and then performing a dimensionality reduction method. We consider two cases. In the first case the similarity matrix will be the distance between the kernel mean embeddings of the distributions in the RKHS (KME case), while in the second case it will be the (symmetrized) KL divergence between KDEs (KDE case). For dimensionality reduction we will use ISOMAP \cite{isomap}. In the setup we have access to each of $N$ distributions $\set{P_1,\dots,P_N}$ through samples drawn from those distributions. The sample drawn from the $\ell^{th}$ distribution is denoted $\set{x_i^{(\ell)}}_{i=1}^{n_\ell}$.

Notice that in the KDE case, in order to compute the KL divergence it is necessary to obtain a valid density function. A particular advantage of our algorithm is that, by choosing the coefficients as described in Section \ref{sec:autoselect}, the resulting sparse approximation is a density function.

Let us start with the KME case, in which the similarity matrix contains the norm of the difference between the distributions' KMEs. The first task is to estimate the KME using some symmetric positive definite kernel $\phi$. For the $\ell^{th}$ distribution, the empirical estimate of its KME is
$$
\widehat{\Psi}(P_\ell) = \frac{1}{n_\ell}\sum_{i=1}^{n_\ell}\phi(\cdot,x_i^{(\ell)}),
$$
with a sparse approximation
$$
\widehat{\Psi}_{0}(P_\ell) = \sum_{i \in \sI^{(\ell)}}\alpha_i^{(\ell)} \phi(\cdot,x_i^{(\ell)}),
$$
for some set $\sI^{(\ell)}$ and $\set{\alpha_i^{(\ell)} | \alpha_i^{(\ell)}\in \bbR}$, where the $\alpha$ coefficients have been computed according to the update method described in Section \ref{sec:autoselect}.

Given all the KMEs, we can now construct a distance matrix. Let $\sH$ be the RKHS of $\phi$. We can use the distance induced by the RKHS to create the matrix $D$, with entries
\begin{align}
D_{\ell,\ell '}:&{=}\:\norm{\widehat{\Psi}(P_\ell) - \widehat{\Psi}(P_{\ell '})}_{\sH}\nonumber\\
&{=}\:\left[\frac{1}{n_{\ell}^2}\sum_{i,j}\phi(x_i^{(\ell)},x_j^{(\ell)}){-}\:2\frac{1}{n_{\ell}n_{\ell '}}\sum_{i,j}\phi(x_i^{(\ell)},x_j^{(\ell ')})\right.\nonumber\\
&{+}\:\left.\frac{1}{n_{\ell '}^2}\sum_{i,j}\phi(x_i^{(\ell ')},x_j^{(\ell ')})\right]^{1/2}.\nonumber
\end{align}
We similarly define $D_0$ based on the sparse KMEs. With such matrix ISOMAP can now be performed to visualize the distributions in, say, $\bbR^2$.

Note that if the samples from $P_\ell$ and $P_{\ell '}$ have $n_\ell$ and $n_{\ell '}$ points, then $D_{\ell , \ell '}$ takes $\Theta(n_\ell^2 + n_\ell n_{\ell '} + n_{\ell'}^2)$ time to compute. Since we need all the pairwise distances, we need $\Theta(N^2)$ such computations. A sparse approximation of the KMEs of $P_\ell$ and $P_{\ell '}$ of sizes $k_\ell$ and $k_{\ell '}$ would instead yield a computation of $\Theta(k_\ell^2 + k_\ell k_{\ell '} + k_{\ell'}^2)$ for each entry. Assuming all samples have the same size $n$, and the sparse approximation size is $k$, then the computation of the distance matrix is reduced from $\Theta(N^2n^2)$ to $\Theta(N^2k^2)$.

Inspired by the work of \cite{finn2009analysis}, we have performed these experiments on flow cytometry data from $N=37$ cancer patients, with sample sizes ranging from 8181 to 108343. We have used the Gaussian kernel, chosen $\sH$ to be its RKHS, and computed the bandwidth based on the `iqr' scale option in R's KernSmooth package. That is, we have computed the interquartile range of the data, averaged over each dimension, and divided by 1.35. After the embedding has been done, we have performed Procrustes analysis on the points so as to account for possible translation and rotation, we also scaled by a suitable factor.

To determine the maximum size $k_\ell$ of each sparse representation, we recall that the SKM procedure takes $O(n_\ell k_\ell + k_\ell^3)$ kernel evaluations, so in order to respect the $n_\ell k_\ell$ factor, we have chosen a small multiple of $\sqrt{n_\ell}$ for $k_\ell$. In this case we picked $k_\ell$ to be the largest integer smaller than $3 \sqrt{n_\ell}$ for each $\ell$. We have implemented the auto-selection scheme described in Section \ref{sec:autoselect}. The results for the case of $\epsilon = 10^{-10}$ are shown in Fig. \ref{fig:fcytoKME} and Table  \ref{tab:fcytoKME}. Although $k_\ell$ is the largest allowed sparsity, the algorithm stops at some $k_{0\ell}\leq k_\ell$. To determine how well $D_0$ approximates $D$, we have plotted the relative error $\frac{\norm{D-D_0}_F}{\norm{D}_F}$ for different values of $\epsilon$, averaged over ten different runs. The result is shown in Fig. \ref{fig:fcytoKMEeps}.

\begin{figure}[htp]
\centering
\includegraphics[width=1\textwidth]{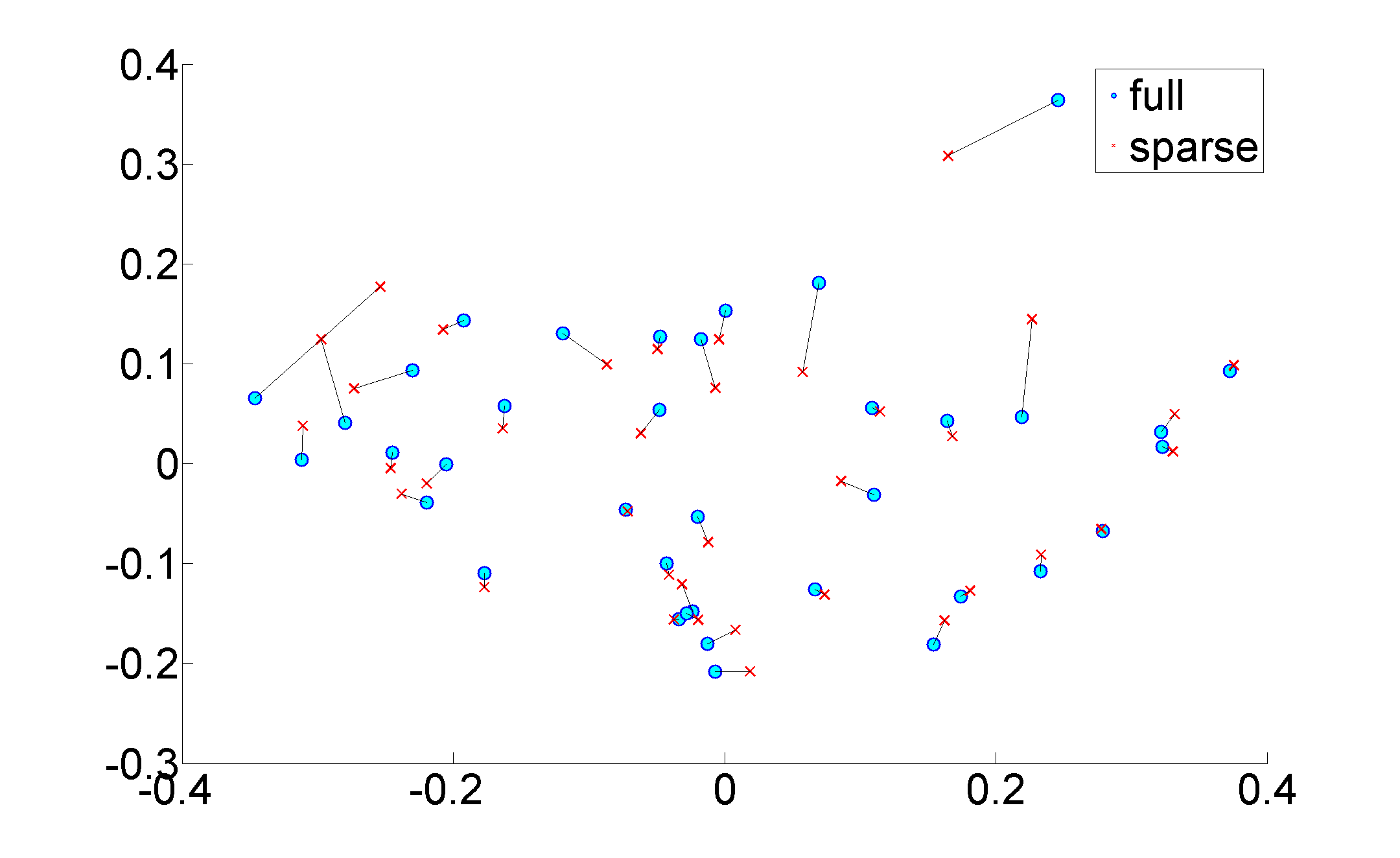}
\caption{2-dimensional representation of flow cytometry data - KME case. Each point represents a patient's distribution. The embeddings were obtained by applying ISOMAP to distances in the RKHS.}
\label{fig:fcytoKME}
\end{figure}

\begin{table}[htp]
  \centering
       \caption{Time comparison for the Euclidean embedding of the flow cytometry dataset - KME case.}
    \begin{tabular}{llll}
    \hline
	& $k$-center  & $D$ computation & Total \\
    \hline
    Full  & 0 & 8.1hrs & 8.1hrs \\
    SKM & 21.7mins & 1.4s & 21.7mins \\
    \hline
    \end{tabular}%
  \label{tab:fcytoKME}%
\end{table}

\begin{figure}[htp]
  \centering
    \includegraphics[width=1\textwidth]{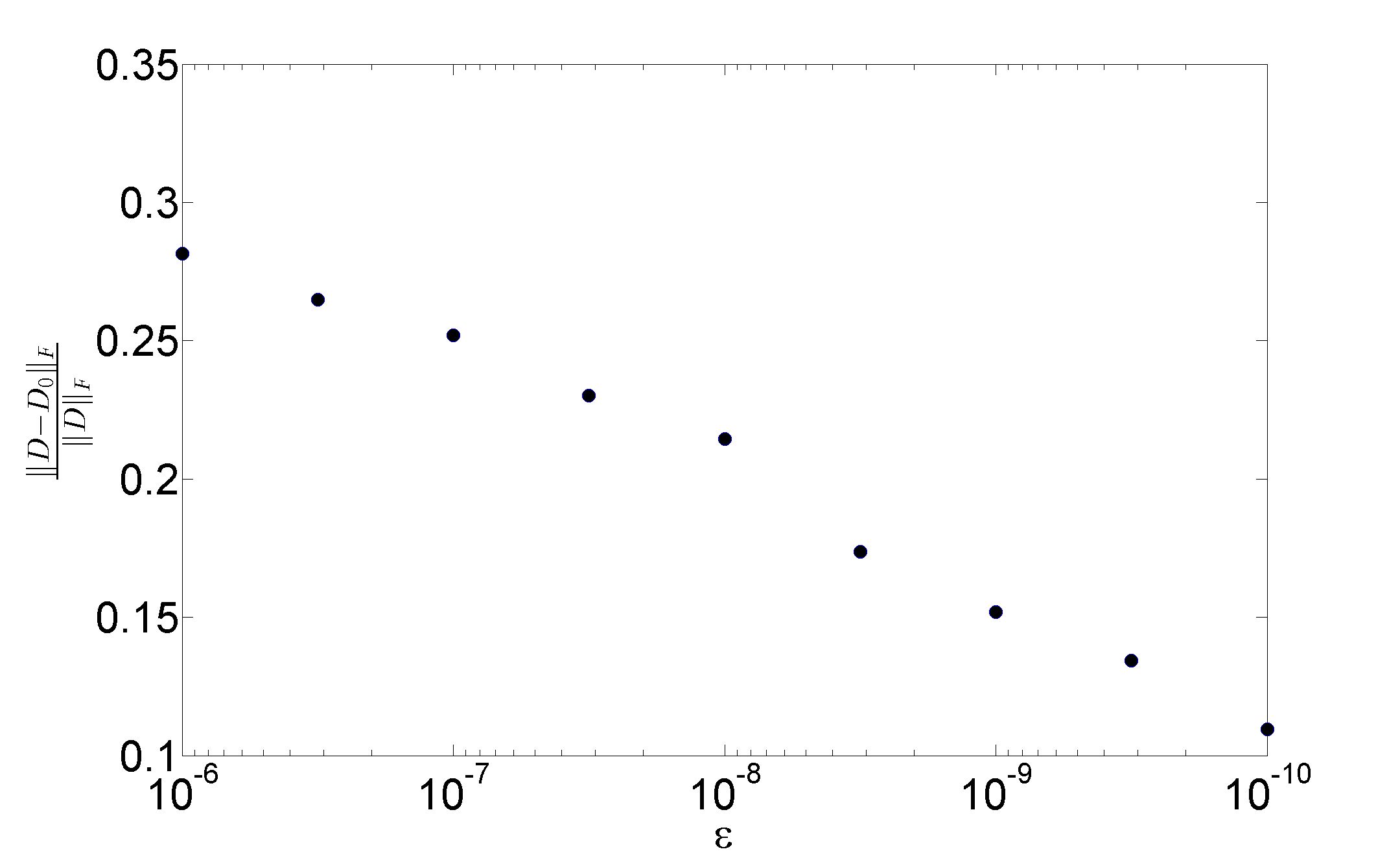}
    \caption{The relative error incurred by the SKM-based matrix $D_0$ as a function of $\epsilon$, averaged over 10 runs - KME case. The average $k$-center and $D_0$ computation times range from 2.4 to 24.6 minutes, and from 0.15 to 2.66 seconds, respectively. The average ratio $k_0/k_{max}$ ranges from 0.13 to 0.81.}
    \label{fig:fcytoKMEeps}
\end{figure}

The KDE case is similar. The similarity matrix is composed of the symmetrized KL divergence between the KDEs of the distributions, defined as $d_{KL}(p,q) := D_{KL}(p \| q) + D_{KL}(q \| p)$, where $D_{KL}$ indicates the KL divergence. For the $\ell^{th}$ distribution, its KDE is
$$
\widehat{f_\ell} = \frac{1}{n_\ell}\sum_{i=1}^{n_\ell}\phi(\cdot,x_i^{(\ell)}),
$$
with a sparse approximation
$$
\widehat{f_0}_{\ell} = \sum_{i \in \sI^{(\ell)}}\alpha_i^{(\ell)} \phi(\cdot,x_i^{(\ell)}).
$$
for some set $\sI^{(\ell)}$ and $\set{\alpha_i^{(\ell)} | \alpha_i^{(\ell)}\geq 0 \,\,\, , \,\,\, \sum_{i}\alpha_i^{(\ell)}=1}$, which has again been calculated according to the update method described in Section \ref{sec:autoselect}. Note that the KL divergence requires two density functions as input, therefore it is important to obtain a valid density. An advantage of our algorithm is that this is possible by obtaining the $\alpha$ coefficients and then projecting into the simplex as indicated in \ref{sec:autoselect}. As in the KME case, we construct the similarity matrix $(D_{\ell,\ell'}):=d_{KL}(\widehat{f_\ell}, \widehat{f_{\ell'}})$.

To compute the KL divergence we split the data in two, use the first half for estimation of the KDE, and the second half for evaluation of the KL divergence. We have chosen $k_\ell = 3\sqrt{n_\ell}$ for each $\ell$, as in the KME case, and used the same stopping criterion with $\epsilon = 10^{-10}$. The results for $\epsilon=10^{-10}$ are shown in Fig. \ref{fig:fcytoKDE} and Table \ref{tab:fcytoKDE}, the plot of $\frac{\norm{D-D_0}_F}{\norm{D}_F}$ for several values of $\epsilon$ is shown in Fig. \ref{fig:fcytoKDEeps}.

Figs. \ref{fig:fcytoKME} and \ref{fig:fcytoKDE} show us that the resulting embedded points using the sparse approximation keep the structure as of those using the full kernel means. Notice also from the Tables that the sparse approximation is many times faster than the full computation (about 20 times faster for each case). Furthermore, in the KME case, the main computational investment is made in finding the elements of the sets $\sI^{(\ell)}$, since the subsequent computation of $D_0$ is of negligible time.

\begin{figure}[htp]
  \centering
    \includegraphics[width=1\textwidth]{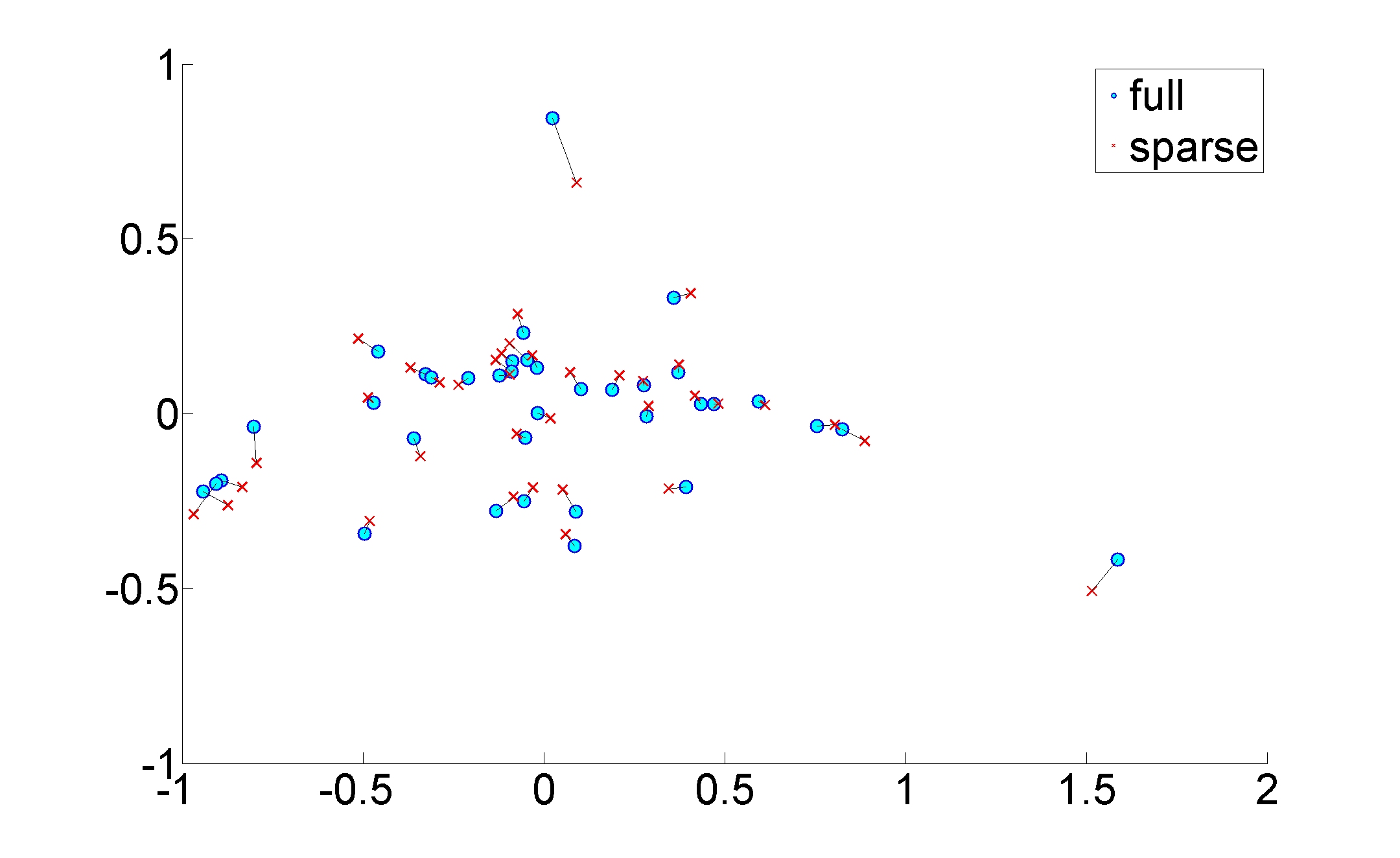}
    \caption{2-dimensional representation of flow cytometry data - KDE case. Each point represents a patient's distribution. The embeddings were obtained by applying ISOMAP to distances between KDEs as measured by the KL divergence.}
\label{fig:fcytoKDE}
\end{figure}

\begin{table}[htp]
  \centering
         \caption{Time Comparison for the Euclidean embedding of the Flow Cytometry dataset - KDE case.}
    \begin{tabular}{llll}
    \hline
	& $k$-center  & $D$ computation & Total \\
    \hline
    Full  & 0 & 2.18 hrs & 2.18 hrs \\
    SKM & 5mins & 2mins & 7mins \\
    \hline
    \end{tabular}%
  \label{tab:fcytoKDE}%
\end{table}

\begin{figure}[htp]
  \centering
    \includegraphics[width=1\textwidth]{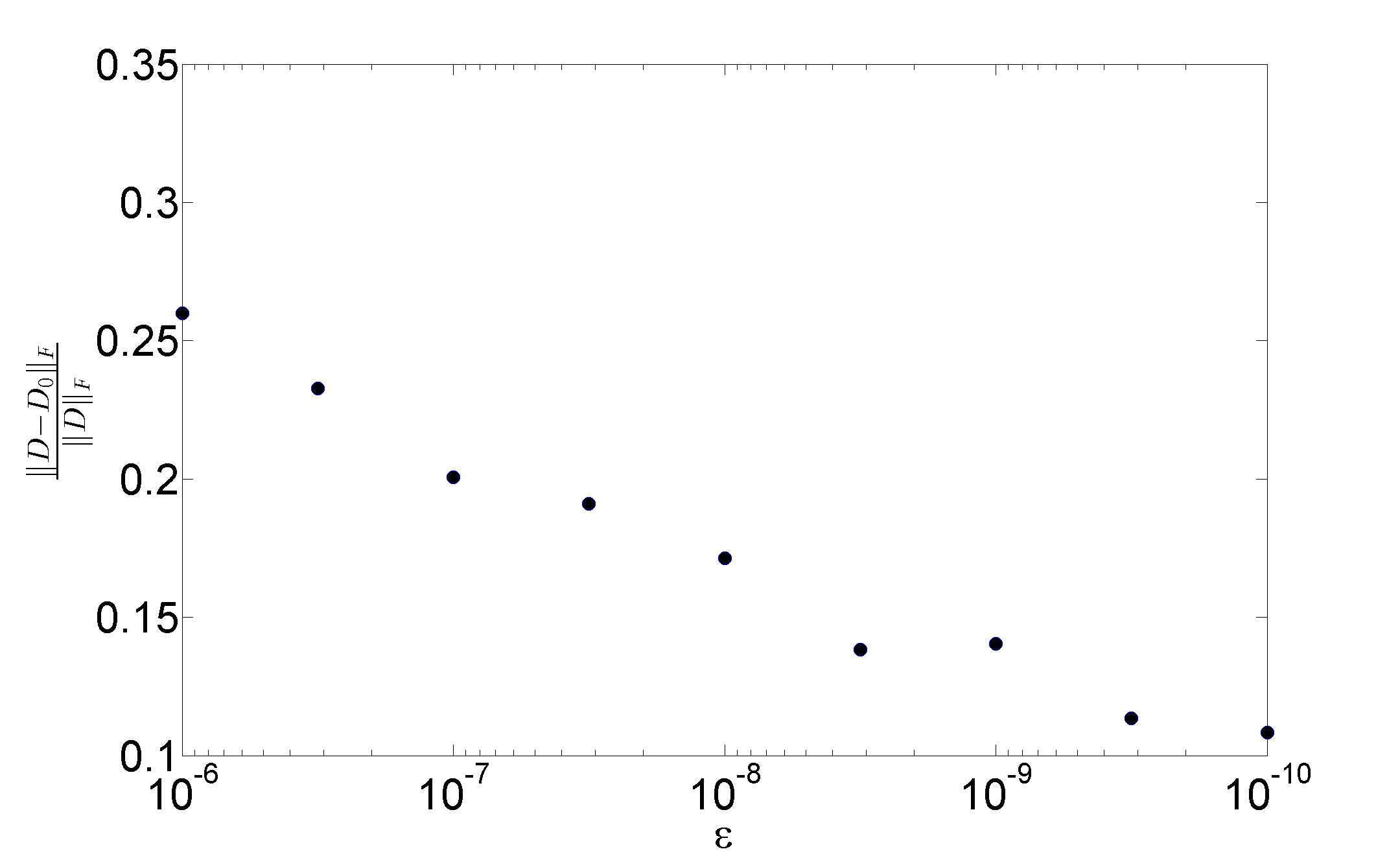}
    \caption{The relative error incurred by the SKM-based matrix $D_0$ as a function of $\epsilon$, averaged over 10 runs - KDE case. The average $k$-center and $D_0$ computation times range from 1 to 7.25 minutes, and from 35 to 160 seconds, respectively. The average ratio $k_0/k_{max}$ ranges from 0.22 to 0.9.}
\label{fig:fcytoKDEeps}
\end{figure}

\subsection{Class Proportion Estimation}
In this setting we are presented with labeled training data drawn from $N$ distributions $\set{P_1,\dots,P_N}$ and with further testing data drawn from a mixture of these distributions $P_0 = \sum_{i=1}^{N}{\pi_i P_i}$, where $\pi_i \geq 0$ and $\sum_i{\pi_i} = 1$. Our goal is to estimate the mixture proportions $\set{\pi_1,\dots,\pi_N}$.

To do so we let $\hat{P}_\ell$ represent the KME of $P_\ell$ for $0\leq \ell \leq N$. We then find the proportions $\set{\hat{\pi}_i}_{i=1}^N$ that minimize the distance
$$
\|{\hat{P}_0 - \sum_{i=1}^N {\pi}_i \hat{P}_i} \| ^2_\sH,
$$
where $\sH$ is the RKHS of the kernel used to construct the KME. By setting the derivative to zero the optimal vector of proportions $\hat{\pi}_{-} := \brac{\hat{\pi}_1,\dots,\hat{\pi}_{N-1}}^T$, subject to $\sum_{i=1}^N{\hat{\pi}_i} = 1$ but not to $\hat{\pi}_i \geq 0$, satisfies
$$
\hat{D}\hat{\pi}_{-}=\hat{e},
$$
where
$$
\hat{D}_{ij} = \inpr{\hat{P}_i - \hat{P}_N, \hat{P}_j - \hat{P}_N}_{\sH}
$$
and
$$
\hat{e}_i = \inpr{\hat{P}_i - \hat{P}_N, \hat{P}_0 - \hat{P}_N}_{\sH}.
$$
From here we can define
$$
\hat{\pi}:= \begin{bmatrix}
\hat{\pi}_{-} \\
1-\sum_{i=1}^{N-1}\hat{\pi}_i
\end{bmatrix}.
$$
A parallel approach, using the KDE instead of the KME is shown in \cite{titterington83minimum}. In that case the distance in $\sH$ was changed to the $L^2$ distance.

Notice we have not enforced the constraint $\hat{\pi}_i \geq 0$, for $1\leq i \leq N$. To do so a quadratic program can be set. For most of our simulations we did not encounter the necessity to do so. Therefore, for the few cases for which $\hat{\pi}$ lied outside of the simplex, we have projected onto it as described in \cite{duchi2008efficient}.

In our setup we have used the handwritten digits data set MNIST, obtained from \cite{lecun0000website}, which contains $60,000$ training images and $10,000$ testing images, approximately evenly distributed among its $10$ classes (see \cite{lecun1998gradient} for details). We have only used the first five digits.

We present a comparison of the performance, measured by the $\ell_1$ distance between the true $\pi$ and the estimate $\hat{\pi}$, of the sparse KME compared to the full KME. We have done this for different values of $\pi$, meaning different locations of $\pi$ inside the simplex. To do so, we sampled $\pi$ from the simplex using the Dirichlet distribution with different concentration parameter $\omega$. As a reminder to the reader, a small value of $\omega$ implies sparse values of $\pi$ are most probable, $\omega=1$ means any value of $\pi$ is equally probable, and $\omega>1$ means values of $\pi$ for which all its entries are of similar value are most probable. We  varied $\omega$ over the set $\set{.1, .2 ,\dots, 3.1}$.

We have split the data in two and used the first half to estimate the kernel bandwidth through the following process. We first sample a true $\pi$, then we construct the KME and pick the bandwidth $\sigma$ which minimizes $\norm{\pi - \hat{\pi}}_{\ell_1}$. We performed the search on $\sigma$ by using Matlab's function {\em fminbnd}. For the SKM case we allowed for 200 iterations, while for the full KME case we only allowed for 20 iterations since the computation time is expensive. We have used the Gaussian kernel, to create the sparse KME of the $\ell^{th}$ distribution, with sparsity level of $k_\ell = 3\sqrt{n_\ell}$, where $n_\ell$ is the size of the available sample from distribution $\ell$. Since the $\alpha$ coefficients depend on $\sigma$, and for each set $\sI^{(\ell)}$ we perform a search over several values of $\sigma$, we did not compute $\alpha$ iteratively as we constructed $\sI^{(\ell)}$. Instead, once the construction of $\sI^{(\ell)}$ was finished, we used the preconditioned conjugate gradient method to obtain $\alpha$.

Once $\sigma$ was estimated, we then accessed the second half of the data to test the performance for both the SKM and the full KME for different values of $\omega$. The results are shown in Fig. \ref{fig:dirich}. We have also plotted for perspective a ``blind" estimation of $\pi$, which uniformly at random picks a vector $\hat{\pi}$. A comparison of the computation times for the sparse KME and the full KME is shown in Table \ref{tab:classproportion}, where we have averaged over all values of $\omega$.

Notice from Table \ref{tab:classproportion} that, in the SKM case, the estimation of $\sigma$ takes about the same time as the computation of $\hat{\pi}$. This is due to the fact that the main bottleneck of the algorithm is the computation of the set $\sI$ which is independent of $\sigma$. In the case of finding an optimal $\sigma$, we applied ten times more iterations than in the full KME case, while keeping the process ten times faster.

\begin{figure}[htp]
  \centering
    \includegraphics[width=1\textwidth]{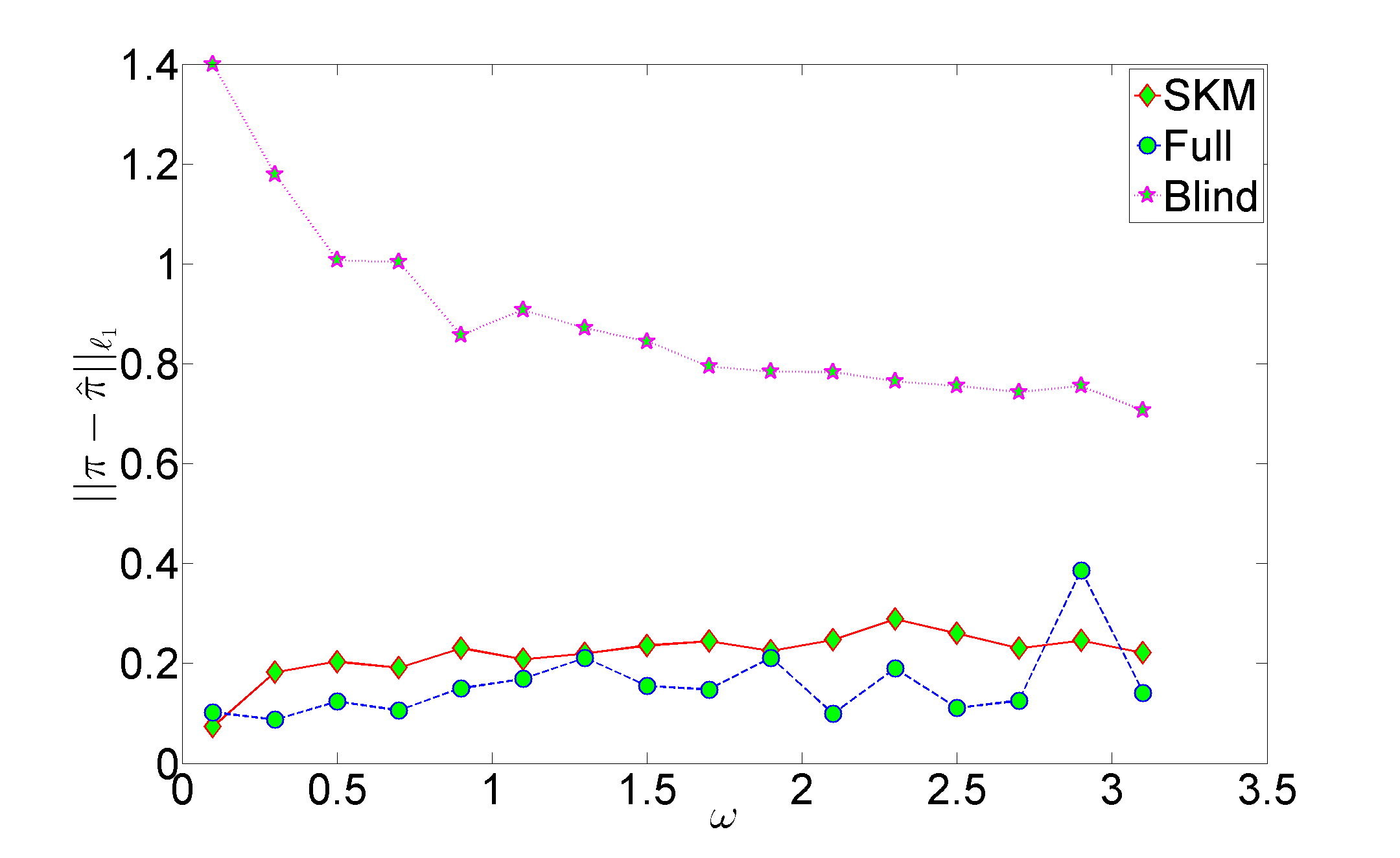}
    \caption{Class Proportion Estimation. $\ell_1$ error of estimated proportions over a range of concentration parameters.}
\label{fig:dirich}
\end{figure}

\begin{table}[htp]
  \centering
      \caption{Computation times for both full and sparse KME, averaged over all values of $\omega$.}
    \begin{tabular}{llll}
    \hline
	& $\sigma$ estimation  & $\hat{\pi}$ computation & Total \\
    \hline
    Full  & 481s & 26s & 8.45mins \\
    SKM & 47.5s & 48.6s & 1.6mins \\
    \hline
    \end{tabular}%
  \label{tab:classproportion}%
\end{table}

\subsection{Mean-Shift Clustering} \label{sec:meanshift}
We have based this experiment on the mean-shift algorithm as described in \cite{comaniciu2002mean}. This algorithm is used in several image processing tasks and we will use it in the context of image segmentation. The goal is to form a clustering of the image pixels into different segments.

Each pixel is represented by a $5$-dimensional vector ($3$ dimensions to describe color, and $2$ for the position in the image), and the distribution of these feature vectors is estimated by the KDE. Denote the image pixels as $\set{x_i}_{i=1}^n$, $x_i \in \bbR^5$. The mean-shift algorithm shifts each point lying on the surface of the density closer to its closest peak (mode). Given a starting point $x$, the algorithm iteratively shifts $x$ closer to its mode until the magnitude of the shift is smaller than some quantity $\gamma$. The shift exerted on $x$ at each iteration requires the computation of the gradient of the KDE at the current position, making mean-shift computationally expensive. Denote the shifted points as $\set{y_i}_{i=1}^n$. Once all points are shifted close to the different modes, then any clustering algorithm can be performed to find the clusters. A clustering algorithm is described in \cite{comaniciu2002mean}, based on merging the modes' neighborhoods which are close. We used a code following these guidelines found at \cite{finkston0000website}, slightly modified by increasing the distance used for modes' neighborhoods to merge.

In our experiments we used a $500 \times 487$ image of a painting by Piet Mondrian ({\em Composition A}), and compared our algorithm with the full density estimation case. We chose $k_{max}$ to be the largest integer smaller than $\sqrt{n}$ and we have used the method for auto-selecting $k_0$ outlined in Section \ref{sec:autoselect}, with $\epsilon=10^{-8}$. We have used the Gaussian kernel and set the bandwidth according to Equation (18) in \cite{chen2014enhanced}, which is specifically suggested for mode-based clustering. We compare the SKM approach to a method based on Locality Sensitive Hashing (LSH, see \cite{gionis1999similarity, andoni2006near}). This method finds for each point and with high probability its nearest neighbors, it then approximates the KDE locally by only using the effect from such neighbors. We chose 5 nearest neighbors and to implement LSH we used the Matlab version of LSH available at \cite{shakhnarovich0000website} (we have used the e2lsh scheme with three hash tables per picture). See \cite{shakhnarovich0000website, andoni0000website} for details on LSH.

We present two indicators to evaluate the performance between the clustering resulting from the full KDE and that resulting from the approximate KDE. In the following, let $\mathcal{B}$ be used to indicate that the full kernel density estimate has been used, while $\mathcal{A}$ indicates either the SKM or the LSH approaches. With a slight abuse of notation, let $\mathcal{A}$ and $\mathcal{B}$ also indicate their resulting clusterings.

{\em Discrepancy Index}. Our first performance measure, which we call the discrepancy index $d_i$, is somehow intuitive, and it describes the ratio of the number of vectors $x_\ell$ which the approximate methods shifted by more than $\delta$ away from their full method counterpart. $\delta$ is here some tolerance threshold, which we have set to three times the kernel bandwidth. More precisely, if $\set{x_\ell}_{\ell=1}^n$ indicate the picture pixels and $y_\ell^\mathcal{A}$, $y_\ell^\mathcal{B}$ are the shifted versions of $x_\ell$ according to density estimation methods $\mathcal{A}$ and $\mathcal{B}$ respectively, then
$$
d_i(\mathcal{A},\mathcal{B}) = \frac{1}{n}\sum_\ell \ind{\norm{y_\ell^\mathcal{A}-y_\ell^\mathcal{B}}>\delta}(x_\ell).
$$

{\em Hausdorff Distance}. The second performance measure, which describes the Hausdorff distance between clusterings, was obtained from \cite{chacon2014population} and is denoted by ${d}_H$. To define the Hausdorff distance, let $P$ be a distribution on $\rd$ (in our case, $P$ is the distribution of the image pixels on $\bbR^5$). Furthermore, let $\mathcal{X}$ be the set of subsets of $\rd$ such that the distance between two sets $A$ and $B$ is $\rho(A,B):=P(A\Delta B)$, where $\Delta$ is the symmetric difference (to be precise, we deal with equivalence classes, where two sets $A$ and $B$ are equivalent if $\rho(A,B)=0$). Notice $\mathcal{X}$ is a metric space. Let $\mathcal{B} \subset \mathcal{X}$, and define $\rho(A,\mathcal{B}):=\min_{B\in \mathcal{B}}\rho(A,B)$. We interpret a subset $\mathcal{A}$ of $\mathcal{X}$ as a clustering, and an element $A$ in $\mathcal{X}$ as a cluster. The Hausdorff distance between two clusterings is
$$
d_H(\mathcal{A},\mathcal{B}) = \max \set{\underset{A\in\mathcal{A}}{\max} \,\,\, \rho(A,\mathcal{B}), \,\, \underset{B\in\mathcal{B}}{\max} \,\,\, \rho({B,\mathcal{A}})}.
$$

In words, $d_H$ measures the furthest distance between elements of $\mathcal{A}$ to the clustering $\mathcal{B}$ and elements of $\mathcal{B}$ to the clustering $\mathcal{A}$. That is, the less overlap between clusters of different clusterings, as measured by $P$. Since we don't have access to $P$, the empirical version of $d_H$ proposed in \cite{chacon2014population} is obtained by replacing $P$ for the empirical probability measure. Letting $\widehat{\rho}(A,\mathcal{B}) := \min_{B\in \mathcal{B}} \frac{1}{n} \sum_{i=1}^{n}\ind{A\Delta B}(x_i)$, we have
$$
\widehat{d}_H(\mathcal{A},\mathcal{B}) = \max \set{\underset{A\in\mathcal{A}}{\max} \,\,\, \widehat{\rho}(A,\mathcal{B}), \,\, \underset{B\in\mathcal{B}}{\max} \,\,\, \widehat{\rho}({B,\mathcal{A}})}.
$$
We use this latter quantity to measure the SKM performance.

The results are presented in Table \ref{tab:meanshift}. In the table $\mathcal{B}$ indicates the full kernel density estimate has been used, $\mathcal{A}_{SKM}$ indicates the $k$-center based algorithm and $\mathcal{A}_{LSH}$ the LSH setup. Note that both the SKM and the LSH approach present significant computational advantages. The SKM approach, however, manages to be faster while incurring half the discrepancy of the LSH and about the same Hausdorff distance.

\begin{table}[htp]
  \centering
   \caption{
Time and Performance Comparison for Mean Shift algorithm.}
    \begin{tabular}{llllll}
    \hline
    & \multicolumn{3}{c}{Time} & \multicolumn{2}{c}{Performance} \\
    \hline
	& Preparation  & Mean Shift & Total & $d_i$($\cdot,\mathcal{B}$)  & $\widehat{d}_H$($\cdot,\mathcal{B}$) \\
    \hline
    $\mathcal{B}$  & 0 & 4hrs & 4hrs & 0 & 0 \\
    $\mathcal{A}_{SKM}$ & 3.26mins & 57s & 4.2mins & 0.018 & 0.021 \\
   $\mathcal{A}_{LSH}$ & 14s & 4.2mins & 4.4mins & 0.034 & 0.016 \\
    \hline

    \end{tabular}
  \label{tab:meanshift}
\end{table}

\subsection{Other Simulations}
\label{sec:simulation}

Unlike other methods for approximating a sum of kernels, the sparse approximation strategy proposed in this paper has the advantage that the resulting approximation can be a valid density if the $\alpha_i$'s are set to satisfy $\alpha_i\geq 0$ and $\sum_i \alpha_i = 1$ . Therefore, we also evaluate the performance of the proposed sparse approximation according to the KL divergence, a common metric between distributions whose arguments must be density functions. Notice in particular that other KDE approximation methods like the Improved Fast Gauss Transform and the LSH-based approach described in Section \ref{sec:meanshift} are not applicable since they don't return valid densities.

For 11 distinct benchmark data sets, listed in Table \ref{tab:Dqp}, we computed the KL divergences $D(\zbar \| z_\sI)$ and $D(z_\sI \| \zbar)$ between the sparse and the full kernel mean. We used the auto-selection scheme proposed in Section \ref{sec:autoselect}, and projected the resulting $\alpha$ onto the simplex to ensure we have a valid probability distribution. We have chosen a Gaussian kernel and used the Jaakkola heuristic \cite{jaakkola1999using} to compute the bandwidth. To place the performance of our approximation in perspective, we have also computed the KL divergences for a sparse approximation based on choosing the set $\sI$ uniformly at random. We have performed the Wilcoxon rank test \cite{wilcoxon1945individual} to determine if there is a significant advantage of the SKM. The test for both the case D($\bar{z} \| z_\sI$) and the case D($z_\sI \| \bar{z}$) yields a $p$-value of 0.0186, favoring the SKM method. The results are shown in Table \ref{tab:Dqp}.

To further illustrate the performance of SKM, we look at the error quantities $E_{\abs{\sI}} = \norm{\zbar - z_\sI}^2 - \norm{\zbar}^2$ (see Section \ref{sec:autoselect}) as the size of $\sI$ increases.  Fig. \ref{fig:kvsrand} shows a plot of $E_{\abs{\sI}}$ against the size of $\sI$ for the banana data set. As a baseline, we have plotted alongside the same error for an approximation based on choosing the set $\sI$ uniformly at random. Since we want to explore how fast $ \norm{\zbar - z_\sI}^2$ approaches zero, we allowed $k_{max}=n$ but used the auto-selection scheme to stop at an earlier $k_0$ with tolerance threshold $\epsilon=10^{-9}$. We averaged 100 times and, at each iteration, we completed the graph by letting $E_{\abs{\sI}}=E_{k_0}$ for $\abs{\sI}>k_0$. The average SKM run stopped at $k_0$=197, and the random sampling comparison at $k_0=240$. The random approximation shows an initial advantage because it is more likely to pick elements from dense areas, which for small values of $\sI$ represents better the full distribution. However, as the size of $\sI$ increases the fine structure (e.g., the distribution tails) is better captured by SKM, since the $k$-center algorithm picks points far apart from each other.

\begin{table}[htp]
  \centering
  \caption{
Values of D($\bar{z} \| z_\sI$) and D($z_\sI \| \bar{z}$) for different data sets.}
    \begin{tabular}{ccccc}
    \hline
    & \multicolumn{2}{c}{D($\bar{z} \| z_\sI$)} & \multicolumn{2}{c}{D($z_\sI \| \bar{z}$)} \\
    \hline
    & Random & SKM & Random & SKM \\
    \hline
banana &    0.092597 & 0.001805 & 0.129183 & 0.001613 \\
image &    0.451205 & 0.041305 & 0.212585 & 0.061584 \\
ringnorm&    0.003983 & 0.031736 & 0.009253 & 0.02853 \\
breast-cancer&    0.358253 & 0.002546 & 0.345895 & 4.56E-05 \\
heart&    0.001918 & 6.35E-16 & 0.005228 & 2.91E-16 \\
thyroid&    0.177317 & 0.000594 & 0.034616 & 0.000289 \\
diabetes&    0.031366 & 0.005474 & 0.014635 & 0.000102 \\
german&    0.008711 & 0.003855 & 0.008742 & 0.00203 \\
twonorm&    0.000131 & 0.000243 & 4.59E-05 & 0.000372 \\
waveform&    0.011473 & 0.000177 & 0.015064 & 0.000404 \\
iris&    0.043924 & 0.000395 & 0.022519 & 0.000104 \\
    \hline
    \end{tabular}%
  \label{tab:Dqp}
\end{table}%

\begin{figure}[htp]
  \centering
    \includegraphics[width=1\textwidth]{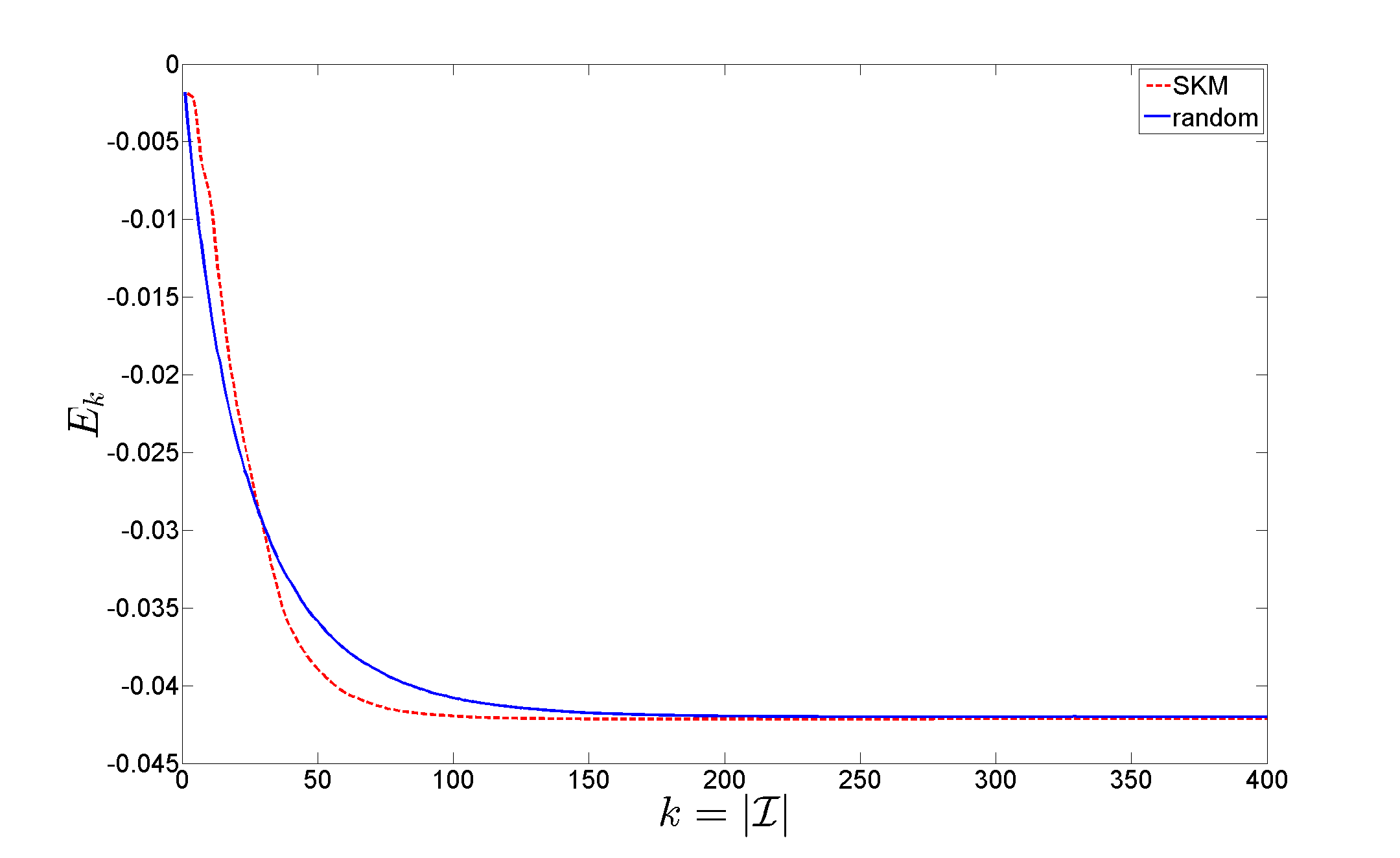}
\caption{ Comparison of $E_{\abs{\sI}}$ between the random algorithm and the $k$-center algorithm for the banana data set.}
\label{fig:kvsrand}
\end{figure}

\section{Conclusion}
We have provided a method to rapidly and accurately build a sparse approximation of a kernel mean. We derived an incoherence based bound on the approximation error and recognized that, for radial kernels, its minimization is equivalent to solving the $k$-center problem on the data points. If desired, our construction of the sparse kernel mean may be slightly modified to provide a valid density function, which is important in some applications. Furthermore, the algorithm works for both kinds of kernel means: the KDE and the KME. Our method also naturally lends itself to a sparsity auto-selection scheme.

We showed its computational advantages and its performance qualities in three specific applications. First, Euclidean embedding of distributions (for both KDE and KME), in which, for the KDE case, a valid density is needed to compute the KL divergence. Second, class proportion estimation (for the KME), which presents the amortization advantages of the SKM approach, in this case with respect to the bandwidth $\sigma$. Finally, mean-shift clustering (for the KDE), in which with less computation time than the LSH-based approach, it performs better with respect to the discrepancy index and similar with respect to the Hausdorff distance. In most instances the proposed sparse kernel mean method has shown to be orders of magnitude faster than the approach based on the full kernel mean.

\section*{Acknowledgments}
The authors thank Lloyd Stoolman of the University of Michigan Department of Pathology for providing the de-identified flow cytometry data set. This work was supported in part by NSF Awards 0953135, 1047871, 1217880, and 1422157.
\bibliographystyle{IEEEtran}
\bibliography{efrenbib}

\newcommand{\noopsort}[1]{} \newcommand{\printfirst}[2]{#1}
  \newcommand{\singleletter}[1]{#1} \newcommand{\switchargs}[2]{#2#1}
\begin{thebibliography}{10}
\providecommand{\url}[1]{#1}
\csname url@samestyle\endcsname
\providecommand{\newblock}{\relax}
\providecommand{\bibinfo}[2]{#2}
\providecommand{\BIBentrySTDinterwordspacing}{\spaceskip=0pt\relax}
\providecommand{\BIBentryALTinterwordstretchfactor}{4}
\providecommand{\BIBentryALTinterwordspacing}{\spaceskip=\fontdimen2\font plus
\BIBentryALTinterwordstretchfactor\fontdimen3\font minus
  \fontdimen4\font\relax}
\providecommand{\BIBforeignlanguage}[2]{{%
\expandafter\ifx\csname l@#1\endcsname\relax
\typeout{** WARNING: IEEEtran.bst: No hyphenation pattern has been}%
\typeout{** loaded for the language `#1'. Using the pattern for}%
\typeout{** the default language instead.}%
\else
\language=\csname l@#1\endcsname
\fi
#2}}
\providecommand{\BIBdecl}{\relax}
\BIBdecl

\bibitem{cruz2014scalable}
E.~Cruz~Cort{\'e}s and C.~Scott, ``Scalable sparse approximation of a sample
  mean,'' in \emph{Proc. 2014 IEEE Int. Conf. on Acoustic, Speech and Signal
  Processing (ICASSP)}, 2014, pp. 5274--5278.

\bibitem{scott0000website}
E.~Cruz~Cort\'es and C.~Scott, ``{SKM} {M}atlab code,''
  \url{http://web.eecs.umich.edu/~cscott/code.html#skm}, 2015, last accessed
  February-2015.

\bibitem{titterington1981comparison}
D.~M. Titterington, G.~D. Murray, L.~S. Murray, D.~J. Spiegelhalter, A.~M.
  Skene, J.~D.~F. Habbema, and G.~J. Gelpke, ``Comparison of discrimination
  techniques applied to a complex data set of head injured patients,''
  \emph{Journal of the Royal Statistical Society. Series A (General)}, pp.
  145--175, 1981.

\bibitem{hand1983comparison}
D.~J. Hand, ``A comparison of two methods of discriminant analysis applied to
  binary data,'' \emph{Biometrics}, pp. 683--694, 1983.

\bibitem{desforges1998applications}
M.~J. Desforges, P.~J. Jacob, and J.~E. Cooper, ``Applications of probability
  density estimation to the detection of abnormal conditions in engineering,''
  \emph{Proceedings of the Institution of Mechanical Engineers, Part C: Journal
  of Mechanical Engineering Science}, vol. 212, no.~8, pp. 687--703, 1998.

\bibitem{yeung2002parzen}
D.~Yeung and C.~Chow, ``Parzen-window network intrusion detectors,'' in
  \emph{Pattern Recognition, 2002. Proceedings. 16th International Conference
  on}, vol.~4.\hskip 1em plus 0.5em minus 0.4em\relax IEEE, 2002, pp. 385--388.

\bibitem{markou2003novelty}
M.~Markou and S.~Singh, ``Novelty detection: a review---part 1: statistical
  approaches,'' \emph{Signal processing}, vol.~83, no.~12, pp. 2481--2497,
  2003.

\bibitem{chandola2009anomaly}
V.~Chandola, A.~Banerjee, and V.~Kumar, ``Anomaly detection: a survey,''
  \emph{ACM Computing Surveys (CSUR)}, vol.~41, no.~3, p.~15, 2009.

\bibitem{cheng1995mean}
Y.~Cheng, ``Mean shift, mode seeking, and clustering,'' \emph{Pattern Analysis
  and Machine Intelligence, IEEE Transactions on}, vol.~17, no.~8, pp.
  790--799, 1995.

\bibitem{steinwart08}
I.~Steinwart and A.~Christmann, \emph{Support Vector Machines}.\hskip 1em plus
  0.5em minus 0.4em\relax Springer, 2008.

\bibitem{smola2007hilbert}
A.~Smola, A.~Gretton, L.~Song, and B.~Sch{\"o}lkopf, ``A {H}ilbert space
  embedding for distributions,'' in \emph{Algorithmic Learning Theory}.\hskip
  1em plus 0.5em minus 0.4em\relax Springer, 2007, pp. 13--31.

\bibitem{gretton2012kernel}
A.~Gretton, K.~M. Borgwardt, M.~J. Rasch, B.~Sch{\"o}lkopf, and A.~Smola, ``A
  kernel two-sample test,'' \emph{The Journal of Machine Learning Research},
  vol.~13, no.~1, pp. 723--773, 2012.

\bibitem{fukumizu2011kernel}
K.~Fukumizu, L.~Song, and A.~Gretton, ``Kernel {B}ayes' rule,'' in
  \emph{Advances in neural information processing systems}, 2011, pp.
  1737--1745.

\bibitem{gurram2012contextual}
P.~Gurram and H.~Kwon, ``Contextual {SVM} for hyperspectral classification
  using {H}ilbert space embedding,'' in \emph{Geoscience and Remote Sensing
  Symposium (IGARSS), 2012 IEEE International}.\hskip 1em plus 0.5em minus
  0.4em\relax IEEE, 2012, pp. 5470--5473.

\bibitem{scovel2010radial}
C.~Scovel, D.~Hush, I.~Steinwart, and J.~Theiler, ``Radial kernels and their
  reproducing kernel hilbert spaces,'' \emph{Journal of Complexity}, vol.~26,
  no.~6, pp. 641--660, 2010.

\bibitem{tropp2004greed}
J.~A. Tropp, ``Greed is good: Algorithmic results for sparse approximation,''
  \emph{Information Theory, IEEE Transactions on}, vol.~50, no.~10, pp.
  2231--2242, 2004.

\bibitem{mallat1993matching}
S.~G. Mallat and Z.~Zhang, ``Matching pursuits with time-frequency
  dictionaries,'' \emph{Signal Processing, IEEE Transactions on}, vol.~41,
  no.~12, pp. 3397--3415, 1993.

\bibitem{jeon1994fast}
B.~Jeon and D.~A. Landgrebe, ``Fast parzen density estimation using
  clustering-based branch and bound,'' \emph{Pattern Analysis and Machine
  Intelligence, IEEE Transactions on}, vol.~16, no.~9, pp. 950--954, 1994.

\bibitem{girolami2003probability}
M.~Girolami and C.~He, ``Probability density estimation from optimally
  condensed data samples,'' \emph{Pattern Analysis and Machine Intelligence,
  IEEE Transactions on}, vol.~25, no.~10, pp. 1253--1264, 2003.

\bibitem{chen2008orthogonal}
S.~Chen, X.~Hong, and C.~J. Harris, ``An orthogonal forward regression
  technique for sparse kernel density estimation,'' \emph{Neurocomputing},
  vol.~71, no.~4, pp. 931--943, 2008.

\bibitem{schaffoner2007memory}
M.~Schaff{\"o}ner, E.~Andelic, M.~Katz, S.~E. Kr{\"u}ger, and A.~Wendemuth,
  ``Memory-effcient orthogonal least squares kernel density estimation using
  enhanced empirical cumulative distribution functions,'' in
  \emph{International Conference on Artificial Intelligence and Statistics},
  2007, pp. 428--435.

\bibitem{scott2001kernels}
D.~W. Scott and W.~F. Szewczyk, ``From kernels to mixtures,''
  \emph{Technometrics}, vol.~43, no.~3, pp. 323--335, 2001.

\bibitem{runnalls2007kullback}
A.~R. Runnalls, ``Kullback-{L}eibler approach to {G}aussian mixture
  reduction,'' \emph{Aerospace and Electronic Systems, IEEE Transactions on},
  vol.~43, no.~3, pp. 989--999, 2007.

\bibitem{schieferdecker2009gaussian}
D.~Schieferdecker and M.~F. Huber, ``Gaussian mixture reduction via
  clustering,'' in \emph{Information Fusion, 2009. FUSION'09. 12th
  International Conference on}.\hskip 1em plus 0.5em minus 0.4em\relax IEEE,
  2009, pp. 1536--1543.

\bibitem{figueiredo2002unsupervised}
M.~A.~T. Figueiredo and A.~K. Jain, ``Unsupervised learning of finite mixture
  models,'' \emph{Pattern Analysis and Machine Intelligence, IEEE Transactions
  on}, vol.~24, no.~3, pp. 381--396, 2002.

\bibitem{bruneau2010parsimonious}
P.~Bruneau, M.~Gelgon, and F.~Picarougne, ``Parsimonious reduction of
  {G}aussian mixture models with a variational-{B}ayes approach,''
  \emph{Pattern Recognition}, vol.~43, no.~3, pp. 850--858, 2010.

\bibitem{greengard1987fast}
L.~Greengard and V.~Rokhlin, ``A fast algorithm for particle simulations,''
  \emph{Journal of computational physics}, vol.~73, no.~2, pp. 325--348, 1987.

\bibitem{gray2000nbody}
A.~G. Gray and A.~W. Moore, ``N-body problems in statistical learning,'' in
  \emph{NIPS}, vol.~4, 2000, pp. 521--527.

\bibitem{yang2003improved}
C.~Yang, R.~Duraiswami, N.~A. Gumerov, and L.~Davis, ``Improved fast {G}auss
  transform and efficient kernel density estimation,'' in \emph{Computer
  Vision, 2003. Proceedings. Ninth IEEE International Conference on}.\hskip 1em
  plus 0.5em minus 0.4em\relax IEEE, 2003, pp. 664--671.

\bibitem{lee2006dualfgt}
D.~Lee, A.~Gray, and A.~W. Moore, ``Dual-tree fast {G}auss transforms,'' in
  \emph{Advances in Neural Information Processing Systems 18 (Dec 2005)},
  Y.~Weiss, B.~Scholkopf, and J.~Platt, Eds.\hskip 1em plus 0.5em minus
  0.4em\relax MIT Press, 2006.

\bibitem{zheng2013quality}
Y.~Zheng, J.~Jestes, J.~M. Phillips, and F.~Li, ``Quality and efficiency for
  kernel density estimates in large data,'' in \emph{Proceedings of the 2013
  international conference on Management of data}.\hskip 1em plus 0.5em minus
  0.4em\relax ACM, 2013, pp. 433--444.

\bibitem{phillips2013epsilon}
J.~M. Phillips, ``$\varepsilon$-samples for kernels,'' in \emph{Proceedings of
  the Twenty-Fourth Annual ACM-SIAM Symposium on Discrete Algorithms}.\hskip
  1em plus 0.5em minus 0.4em\relax SIAM, 2013, pp. 1622--1632.

\bibitem{Joshi2011comparing}
S.~Joshi, R.~V. Kommaraji, J.~M. Phillips, and S.~Venkatasubramanian,
  ``Comparing distributions and shapes using the kernel distance,'' in
  \emph{Proceedings of the twenty-seventh annual symposium on Computational
  geometry}.\hskip 1em plus 0.5em minus 0.4em\relax ACM, 2011, pp. 47--56.

\bibitem{zhao2014fastmmd}
J.~Zhao and D.~Meng, ``Fast{MMD}: Ensemble of circular discrepancy for
  efficient two-sample test,'' \emph{NIPS Workshop on Randomized Methods for
  Machine Learning}, 2013.

\bibitem{drineas2005nystrom}
P.~Drineas and M.~W. Mahoney, ``On the {N}ystr{\"o}m method for approximating a
  gram matrix for improved kernel-based learning,'' \emph{The Journal of
  Machine Learning Research}, vol.~6, pp. 2153--2175, 2005.

\bibitem{kumar2012sampling}
S.~Kumar, M.~Mohri, and A.~Talwalkar, ``Sampling methods for the {N}ystr{\"o}m
  method,'' \emph{The Journal of Machine Learning Research}, vol.~13, no.~1,
  pp. 981--1006, 2012.

\bibitem{march2014far}
W.~B. March and G.~Biros, ``Far-field compression for fast kernel summation
  methods in high dimensions,'' \emph{arXiv preprint arXiv:1409.2802}, 2014.

\bibitem{noumir2012one}
Z.~Noumir, P.~Honeine, and C.~R., ``One-class machines based on the coherence
  criterion,'' in \emph{Statistical Signal Processing Workshop (SSP), 2012
  IEEE}.\hskip 1em plus 0.5em minus 0.4em\relax IEEE, 2012, pp. 600--603.

\bibitem{vazirani2001approximation}
V.~V. Vazirani, \emph{Approximation algorithms}.\hskip 1em plus 0.5em minus
  0.4em\relax Springer, 2001.

\bibitem{gonzalez1985clustering}
T.~F. Gonzalez, ``Clustering to minimize the maximum intercluster distance,''
  \emph{Theoretical Computer Science}, vol.~38, pp. 293--306, 1985.

\bibitem{hochbaum1996approximation}
D.~S. Hochbaum, \emph{Approximation algorithms for NP-hard problems}.\hskip 1em
  plus 0.5em minus 0.4em\relax PWS Publishing Co., 1996.

\bibitem{duchi2008efficient}
J.~Duchi, S.~Shalev-Shwartz, Y.~Singer, and T.~Chandra, ``Efficient projections
  onto the {$\ell_1$}-ball for learning in high dimensions,'' in
  \emph{Proceedings of the 25th international conference on Machine
  learning}.\hskip 1em plus 0.5em minus 0.4em\relax ACM, 2008, pp. 272--279.

\bibitem{isomap}
J.~B. Tenenbaum, V.~de~Silva, and J.~C. Langford, ``A global geometric
  framework for nonlinear dimensionality reduction,'' \emph{Science}, vol. 290,
  pp. 2319--2323, 2000.

\bibitem{finn2009analysis}
W.~G. Finn, K.~M. Carter, R.~Raich, L.~M. Stoolman, and A.~O. Hero, ``Analysis
  of clinical flow cytometric immunophenotyping data by clustering on
  statistical manifolds: Treating flow cytometry data as high-dimensional
  objects,'' \emph{Cytometry Part B: Clinical Cytometry}, vol.~76, no.~1, pp.
  1--7, 2009.

\bibitem{titterington83minimum}
D.~M. Titterington, ``Minimum distance non-parametric estimation of mixture
  proportions,'' \emph{Journal of the Royal Statistical Society}, vol.~45,
  no.~1, pp. 37--46, 1983.

\bibitem{lecun0000website}
Y.~LeCun, ``The mnist database,'' \url{http://yann.lecun.com/exdb/mnist}, 2014,
  last accessed 24-September-2014.

\bibitem{lecun1998gradient}
Y.~LeCun, L.~Bottou, Y.~Bengio, and P.~Haffner, ``Gradient-based learning
  applied to document recognition,'' \emph{Proceedings of the IEEE}, vol.~86,
  no.~11, pp. 2278--2324, 1998.

\bibitem{comaniciu2002mean}
D.~Comaniciu and P.~Meer, ``Mean shift: A robust approach toward feature space
  analysis,'' \emph{Pattern Analysis and Machine Intelligence, IEEE
  Transactions on}, vol.~24, no.~5, pp. 603--619, 2002.

\bibitem{finkston0000website}
B.~Finkston, ``Mean shift clustering,''
  \url{http://www.mathworks.com/matlabcentral/fileexchange/10161-mean-shift-clustering},
  2014, last accessed 24-September-2014.

\bibitem{chen2014enhanced}
Y.~Chen, C.~R. Genovese, and L.~Wasserman, ``Enhanced mode clustering,''
  \emph{arXiv preprint arXiv:1406.1780}, 2014.

\bibitem{gionis1999similarity}
A.~Gionis, P.~Indyk, and R.~Motwani, ``Similarity search in high dimensions via
  hashing,'' in \emph{VLDB}, vol.~99, 1999, pp. 518--529.

\bibitem{andoni2006near}
A.~Andoni and P.~Indyk, ``Near-optimal hashing algorithms for approximate
  nearest neighbor in high dimensions,'' in \emph{Foundations of Computer
  Science, 2006. FOCS'06. 47th Annual IEEE Symposium on}.\hskip 1em plus 0.5em
  minus 0.4em\relax IEEE, 2006, pp. 459--468.

\bibitem{shakhnarovich0000website}
G.~Shakhnarovich, ``Locality {S}ensitive {H}ashing,''
  \url{http://ttic.uchicago.edu/~gregory}, 2014, last accessed
  24-September-2014.

\bibitem{andoni0000website}
A.~Andoni, ``{LSH} algorithm and implementation,''
  \url{http://www.mit.edu/~andoni/LSH}, 2014, last accessed 24-September-2014.

\bibitem{chacon2014population}
J.~E. Chac{\'o}n, ``A population background for nonparametric density-based
  clustering,'' \emph{arXiv preprint arXiv:1408.1381}, 2014.

\bibitem{jaakkola1999using}
T.~Jaakkola, M.~Diekhans, and D.~Haussler, ``Using the fisher kernel method to
  detect remote protein homologies.'' in \emph{ISMB}, vol.~99, 1999, pp.
  149--158.

\bibitem{wilcoxon1945individual}
F.~Wilcoxon, ``Individual comparisons by ranking methods,'' \emph{Biometrics
  bulletin}, vol.~1, no.~6, pp. 80--83, 1945.

\end{thebibliography}

\end{document}